\DeclarePairedDelimiter{\ceil}{\lceil}{\rceil}
\theoremstyle{plain}
\newtheorem{theorem}{Theorem}[section]
\newtheorem{lemma}{Lemma}[section]
\newtheorem{proposition}{Proposition}[section]
\theoremstyle{definition}
\newtheorem{example}{Example}[section]
\newtheorem{assumption}{Assumption}[section]
\theoremstyle{remark}
\newtheorem*{remark}{Remark}
\begin{document}

\begin{frontmatter}
\title{Target Network and Truncation Overcome the\\ Deadly Triad in $Q$-learning}

\begin{aug}
\author[A]{\fnms{Zaiwei} \snm{Chen}\ead[label=e1,mark]{zchen458@gatech.edu}},
\author[B]{\fnms{John-Paul} \snm{Clarke}\ead[label=e2,mark]{johnpaul@utexas.edu}},
\and
\author[A]{\fnms{Siva Theja} \snm{Maguluri}\ead[label=e3,mark]{siva.theja@gatech.edu}}

\address[A]{
Georgia Institute of Technology,
\printead{e1,e3}}

\address[B]{
The University of Texas at Austin,
\printead{e2}}

\end{aug}

\begin{abstract}
$Q$-learning with function approximation is one of the most empirically successful while theoretically mysterious reinforcement learning (RL) algorithms, and was identified in \cite{sutton1999open} as one of the most important theoretical open problems in the RL community. Even in the basic linear function approximation setting, there are well-known divergent examples. In this work, we show that \textit{target network} and \textit{truncation} together are enough to provably stabilize $Q$-learning with linear function approximation, and we establish the finite-sample guarantees. The result implies an $\Tilde{\mathcal{O}}(\epsilon^{-2})$ sample complexity up to a function approximation error. Moreover, our results do not require strong assumptions or modifying the problem parameters as in existing literature.
\end{abstract}

\end{frontmatter}

\section{Introduction}
The Deep $Q$-Network \citep{mnih2015human}, as a typical example of $Q$-learning with function approximation, is one of the most successful algorithms to solve the reinforcement learning (RL) problem, and hence is viewed as a milestone in the development of modern RL. On the other hand, the behavior of $Q$-learning with function approximation is theoretically not well understood, and was identified in \cite{sutton1999open} as one of four most important theoretical open problems. In fact, the infamous deadly triad \citep{sutton2015introduction} is present in $Q$-learning with function approximation, and hence even in the basic setting where linear function approximation is used, the algorithm was shown to be unstable in general \citep{baird1995residual}. 

While theoretically unclear, it was empirically evident from \cite{mnih2015human}  that the following three ingredients: \textit{experience replay}, \textit{target network}, and \textit{truncation} together overcome the divergence of $Q$-learning with function approximation. In this work, we focus on $Q$-learning with linear function approximation for infinite horizon discounted Markov decision processes (MDPs), and show theoretically that target network together with truncation is sufficient to provably stabilize $Q$-learning. The main contributions of this work are summarized in the following.

\begin{itemize}
    \item \textbf{Finite-Sample Guarantees.} 
    We establish finite-sample guarantees of the output of $Q$-learning with target network and truncation to the optimal $Q$-function $Q^*$ up to a function approximation error. This is the first variant of $Q$-learning with linear function approximation that is provably stable (without needing strong assumptions), and uses a single trajectory of Markovian samples. The result implies an $\Tilde{\mathcal{O}}(\epsilon^{-2})$ sample complexity, which matches with the sample complexity of $Q$-learning in the tabular setting, and is known to be optimal up to a logarithmic factor. The function approximation error in our finite-sample bound well captures the approximation power of the chosen function class. In the special case of tabular setting, or assuming the function class is closed under Bellman operator, our result implies asymptotic convergence in the mean-square sense to the optimal $Q$-function $Q^*$. 
    \item \textbf{Broad Applicability.} In existing literature, to stabilize $Q$-learning with linear function approximation, one usually requires strong assumptions on the underlying MDP and/or the approximating function class. Those assumptions include but not limited to the function class being complete with respect to the Bellman operator, the MDP being linear (or close to linear), and a so-called strong negative drift assumption, etc. In this work, we do not require any of those assumptions. Specifically, our result holds as long as the policy used to collect samples enables the agent to sufficiently explore the state-action space, which is to some extent a necessary requirement to find an optimal policy in RL.
\end{itemize}

\subsection{Related Work}\label{subsec:literature}
The $Q$-learning algorithm was first proposed in \cite{watkins1992q}. Since then, theoretically understanding the behavior of $Q$-learning has been a major topic in the RL community. In particular, the asymptotic convergence of $Q$-learning was established in \cite{tsitsiklis1994asynchronous,jaakkola1994convergence,borkar2000ode,lee2019unified}, and the asymptotic convergence rate in \cite{szepesvari1998asymptotic,devraj2017zap}. Beyond the asymptotic behavior, recently there has been an increasing interest in studying finite-sample convergence guarantees of $Q$-learning. Here is a non-exhaustive list: \cite{even2003learning,beck2012error,beck2013improved,chen2020finite,chen2021finite,chandak2021concentration,borkar2021concentration,li2020sample,li2021q,li2021breaking,wainwright2019stochastic,wainwright2019variance,jin2018q,qu2020finite}, leading to the optimal $\Tilde{\mathcal{O}}(\epsilon^{-2})$ sample complexity of $Q$-learning. Other variants of $Q$-learning such as zap $Q$-learning and double $Q$-learning were proposed and studied in \cite{devraj2017zap} and \cite{hasselt2010double}, respectively.

When using function approximation, the infamous deadly triad  (i.e., function approximation, off-policy sampling, and bootstrapping) \citep{sutton2018reinforcement} appears in $Q$-learning, and the algorithm can be unstable even when linear function approximation is used. This is evident from the divergent MDP example constructed in \cite{baird1995residual}. There are many attempts to stabilize $Q$-learning with linear function approximation, which are summarised in the following. See Appendix \ref{ap:literature} for a more detailed survey about existing results and their limitations.

\textbf{Strong Negative Drift Assumption.} The asymptotic convergence of $Q$-learning with linear function approximation was established in \cite{melo2008analysis} under a ``negative drift'' assumption. Under similar assumptions, the finite-sample analysis of $Q$-learning, as well as its on-policy variant SARSA, was performed in \cite{chen2019finitesample,gao2021finite,lee2019unified,zou2019finite} for using linear function approximation, and in \cite{xu2020finite,cai2019neural} for using neural network approximation. However, such negative drift assumption is highly artificial, highly restrictive, and is impossible to satisfy unless the discount factor of the MDP is extremely small. See Appendix \ref{ap:negative_drift} for a more detailed explanation. In this work, we do not require such negative drift assumption or any of its variants to stabilize $Q$-learning with linear function approximation.

\textbf{Modifying the Problem Discount Factor.}
Very recently, new convergent variants of $Q$-learning with linear function approximation were proposed in \cite{carvalho2020new,zhang2021breaking}, where target network was used in the algorithm. However, as we will see later in Section \ref{subsec:insufficiency_of_TN}, target network alone is not sufficient to provably stabilize $Q$-learning. The reason that \cite{carvalho2020new,zhang2021breaking} achieve convergence of $Q$-learning is by implicitly modifying the discount factor. In fact, the problem they are effectively solving is no longer the original MDP, but an MDP with a much smaller discount factor, which is the reason why their algorithms do not converge to the optimal $Q$-function $Q^*$ in the tabular setting. See Appendices \ref{ap:TN} and \ref{ap:Melo} for more details. In this work we do not modify the original problem parameters to achieve stability, and in the special case of tabular RL, we have convergence to $Q^*$.

\textbf{The Greedy-GQ Algorithm.} A two time-scale variant of $Q$-learning with linear function approximation, known as Greedy-GQ, was proposed in \cite{maei2010toward}. The algorithm is designed based on minimizing the projected Bellman error using stochastic gradient descent. Although the Greedy-GQ algorithm is stable without needing the negative drift assumption, since the Bellman error is in general non-convex, Greedy-GQ algorithm can only guarantee convergence to stationary points. As a result, there are no performance guarantees on how well the limit point approximates the optimal $Q$-function $Q^*$. Although finite-sample bounds for Greedy-GQ were recently established in \cite{wang2020finite,ma2021greedy,xu2021sample}, due to the lack of global optimality, the finite-sample bounds were on the gradient of the Bellman error rather than the distance to $Q^*$. In this work we provide finite-sample guarantees to the optimal $Q$-function $Q^*$ (up to a function approximation error).

\textbf{Fitted $Q$-Iteration and Its Variants.} Fitted $Q$-iteration is proposed in \cite{ernst2005tree} as an offline variant of $Q$-learning. The finite-sample guarantees of fitted $Q$-iteration (or more generally fitted value iteration) were established in \cite{szepesvari2005finite, munos2008finite}. More recently, \cite{xie2020batch} proposes a variant of batch RL algorithms called BVFT, where the authors establish an $\Tilde{\mathcal{O}}(\epsilon^{-4})$ sample complexity under the realizability assumption. Notably, \cite{szepesvari2005finite, munos2008finite} employed truncation technique to ensure the boundedness of the function approximation class. Such truncation technique dates back to \cite{gyorfi2002distribution}. We use the same truncation technique in this paper.
In the special case of linear function approximation, $Q$-learning with target network can be viewed as an approximate way of implementing the fitted $Q$-iteration, where stochastic gradient descent was used as a way of performing such fitting. Compared to \cite{szepesvari2005finite, munos2008finite}, the main difference of this work is that our algorithm is implemented in an online manner, and is driven by a single trajectory of Markovian samples. 

Another variant of fitted $Q$-iteration targeting finite horizon MDPs was proposed in \cite{du2019provably} using a distribution shift checking oracle. However, \cite{du2019provably} requires the approximating function class to contain the optimal $Q$-function, and only polynomial sample complexity, i.e., $\Tilde{\mathcal{O}}(\epsilon^{-n})$ for some positive integer $n$, was established.  In this work, we do not require $Q^*$ to be within our chosen function class, and our algorithm achieves the optimal $\Tilde{\mathcal{O}}(\epsilon^{-2})$ sample complexity.

\textbf{Linear MDP Model.} In the special case that the MDP has linear transition dynamics and linear reward, convergent variants of $Q$-learning with linear function approximation were designed and analyzed in \cite{yang2019sample,yang2020reinforcement,jin2020provably,zhou2021provably,he2021uniform,li2021sampleefficient}. Such linear model assumption can be relaxed to the case where the MDP is approximately linear. In this work, we do not make any assumption on the underlying structure of the MDP, except the uniform ergodicity of the Markov chain induced by the behavior policy.

\textbf{Other Work.} \cite{du2020agnostic} studies $Q$-learning with function approximation for deterministic MDPs. The Deep $Q$-Network was studied in \cite{fan2020theoretical}. See Appendix \ref{ap:DQN} for a more detailed discussion about the Deep $Q$-Network.

\section{Background on RL and $Q$-Learning}\label{sec:background}
We model the RL problem as an infinite horizon discounted MDP defined by a $5$-tuple $(\mathcal{S},\mathcal{A},\mathcal{P},\mathcal{R},\gamma)$, where $\mathcal{S}$ is a finite set of states, $\mathcal{A}$ is a finite set of actions, $\mathcal{P}=\{P_a\in\mathbb{R}^{|\mathcal{S}|\times|\mathcal{S}|}\mid a\in\mathcal{A}\}$ is a set of \textit{unknown} transition probability matrices, $\mathcal{R}:\mathcal{S}\times\mathcal{A}\mapsto[0,1]$ is an \textit{unknown} reward function, and $\gamma\in (0,1)$ is the discount factor. Our results can be generalized to continuous-state finite-action MDPs. We restrict our attention to finite-state setting for ease of exposition.

Define the state-action value function of a policy $\pi$ by $Q^\pi(s,a)=\mathbb{E}[\sum_{k=0}^\infty\gamma^k\mathcal{R}(S_k,A_k)\mid S_0=s,A_0=a]$ for all $(s,a)$. The goal is to find an optimal policy $\pi^*$ so that its associated $Q$-function (denoted by $Q^*$) is maximized uniformly for all $(s,a)$. A well-known relation between the optimal $Q$-function and any optimal policy $\pi^*$ states that $\pi^*(\cdot|s)$ is supported on the set $\arg\max_{a\in\mathcal{A}}Q^*(s,a)$ for all $s$. Therefore, to find an optimal policy, it is enough the find the optimal $Q$-function, which is the motivation of the $Q$-learning algorithm. The $Q$-learning algorithm is designed to find $Q^*$ by solving the Bellman equation $Q^*=\mathcal{H}(Q^*)$ using stochastic approximation, and provably converges. However, $Q$-learning becomes intractable for MDPs with large state-action space. This motivates the use of function approximation, where the idea is to approximate the optimal $Q$-function from a pre-specified function class. 

In this work, we focus on using linear function approximation. Let $\phi_i\in\mathbb{R}^{|\mathcal{S}||\mathcal{A}|}$, $i=1,2,...,d$, be a set of basis vectors, and denote $\phi(s,a)=(\phi_1(s,a),\cdots,\phi_d(s,a))\in\mathbb{R}^d$ for all $(s,a)$. We assume without loss of generality that the basis vectors $\{\phi_i\}_{1\leq i\leq d}$ are linearly independent, and are normalized so that $\|\phi(s,a)\|_1\leq 1$ for all $(s,a)$, where $\|\cdot\|_1$ stands for the $\ell_1$-norm. Let $\Phi\in\mathbb{R}^{|\mathcal{S}||\mathcal{A}|\times d}$ be defined by
\begin{align*}
	\Phi = \left[\begin{array}{ccc}
	\vert     &  & \vert\\
	\phi_1     & ... & \phi_d\\
	\vert & & \vert
	\end{array}\right] = \left[\begin{array}{ccc}
	\mbox{---}   & \phi(s_{1},a_{1})^\top & \mbox{---}\\
	...    & ... & ...\\
	\mbox{---}   & \phi(s_{|\mathcal{S}|},a_{|\mathcal{A}|})^\top & \mbox{---}
	\end{array}\right].
\end{align*}
Using the feature matrix $\Phi$, the linear sub-space spanned by $\{\phi_i\}_{1\leq i\leq d}$ can be compactly written as $\mathcal{W}=\{Q_\theta\in\mathbb{R}^{|\mathcal{S}||\mathcal{A}| }\mid Q_\theta=\Phi\theta,\;\theta\in \mathbb{R}^d\}$. The goal of $Q$-learning with linear function approximation is to design a stable algorithm that provably finds an approximation of the optimal $Q$-function $Q^*$ from the linear sub-space $\mathcal{W}$.

\section{Algorithm and Finite-Sample Guarantees}\label{sec:finite-sample-bounds}
In this section, we first present the algorithm of $Q$-learning with linear function approximation using target network and truncation. Then we provide the finite-sample guarantees of our algorithm to the optimal $Q$-function $Q^*$ up to a function approximation error. The detailed proofs of all technical results are provided in the Appendix.

\subsection{Stable Algorithm Design}
To present our algorithm, we introduce the truncation operator $\ceil{\cdot}$ in the following. For any vector $x$, let $\ceil{x}$ be the resulting vector of $x$ component-wisely truncated from both above and below at $r=1/(1-\gamma)$, i.e., for each component $\ceil{x}_i$ of $\ceil{x}$, we have $\ceil{x}_i=r$ if $x_i>r$, $\ceil{x}_i=x_i$ if $x_i\in [-r,r]$, and $\ceil{x}_i=-r$ if $x_i<-r$. The reason that we pick the truncation level $r$ to be $1/(1-\gamma)$ is that $\|Q^*\|_\infty\leq 1/(1-\gamma)$. Therefore by performing truncation we do not exclude $Q^*$.

\begin{algorithm}[h]\caption{$Q$-Learning with Linear Function Approximation: Target Network and Truncation}\label{alg:main}
\begin{algorithmic}[1] 
	\STATE {\bfseries Input:} Integers $T$, $K$, initializations $\theta_{t,0}=\bm{0}$ for all $t=0,1,...,T-1$ and $\hat{\theta}_0=\bm{0}$, behavior policy $\pi_b$
	\FOR{$t=0,1,\cdots,T-1$}
	\FOR{$k=0,1,\cdots,K-1$}
	\STATE Sample $A_k\sim \pi_b(\cdot|S_k)$, $S_{k+1}\sim P_{A_k}(S_k,\cdot)$
	\STATE $\theta_{t,k+1}=\theta_{t,k}+\alpha_k\phi(S_k,A_k)(\mathcal{R}(S_k,A_k)+\gamma\max_{a'\in\mathcal{A}}\ceil{\phi(S_{k+1},a')^\top\hat{\theta}_t}-\phi(S_k,A_k)^\top \theta_{t,k})$
	\ENDFOR
	\STATE $\hat{\theta}_{t+1}=\theta_{t,K}$
	\STATE $S_0=S_K$
	\ENDFOR
	\STATE\textbf{Output:} $\hat{\theta}_T$
\end{algorithmic}
\end{algorithm}

Several remarks are in order. First of all, Algorithm \ref{alg:main} is simple, easy to implement, and can be generalized to using arbitrary parametric function approximation in a straightforward manner (see Appendix \ref{ap:DQN}). Second, in addition to $\{\theta_{t,k}\}$, we introduce $\{\hat{\theta}_t\}$ as the target network parameter, which is fixed in the inner loop where we update $\theta_{t,k}$, and is synchronized to the last iterate $\theta_{t,K}$ in the outer loop. Target network was first introduced in \cite{mnih2015human} for the design of the celebrated Deep $Q$-Network. Finally, before using the $Q$-function estimate associated with the target network in the inner-loop, we first truncate it at level $r$ (see line 5 of Algorithm \ref{alg:main}). 

Note that the location where we impose the truncation operator is different from that in the Deep $Q$-Network \citep{mnih2015human}, where instead of only truncating $\phi(S_{k+1},a')^\top\hat{\theta}_t$, truncation is performed for the entire temporal difference $\mathcal{R}(S_k,A_k)+\gamma\max_{a'\in\mathcal{A}}\phi(S_{k+1},a')^\top\hat{\theta}_t-\phi(S_k,A_k)^\top \theta_{t,k}$. Similar truncation technique has been employed in \cite{munos2008finite,jin2018q}. The reason that target network and truncation together ensure the stability of $Q$-learning with linear function approximation will be illustrated in detail in Section \ref{sec:roadmap}.

On the practical side, Algorithm \ref{alg:main} uses a \textit{single trajectory} of Markovian samples generated by the behavior policy $\pi_b$ (see line 4 and line 8 of Algorithm \ref{alg:main}). Therefore, the agent does not have to constantly reset the system. Our result can be easily generalized to the case where one uses time-varying behavior policy (i.e., the behavior policy is updated across the iterations of the target network) as long as it ensures sufficient exploration. For example, one can use the $\epsilon$-greedy policy or the Boltzmann exploration policy (aka. softmax policy) with respect to the $Q$-function estimate associated with the target network $Q_{\hat{\theta}_t}$ as the behavior policy.

\subsection{Finite-Sample Guarantees}
To present the finite-sample guarantees of Algorithm \ref{alg:main}, we first formally state our assumption about the behavior policy $\pi_b$ and introduce necessary notation.

\begin{assumption}\label{as:MC}
The behavior policy $\pi_b$ satisfies $\pi_b(a|s)>0$ for all $(s,a)$, and induces an irreducible and aperiodic Markov chain $\{S_k\}$.
\end{assumption}

This assumption ensures that the behavior policy sufficient explores the state-action space, and is commonly imposed for value-based RL algorithms in the literature \cite{tsitsiklis1997analysis}. Note that Assumption \ref{as:MC} implies that the Markov chain $\{S_k\}$ admits a unique stationary distribution, denoted by $\mu\in\Delta^{|\mathcal{S}|}$, and mixes at a geometric rate \citep{levin2017markov}. As a result, letting $t_\delta=\min\{k\geq 0\;:\;\max_{s\in\mathcal{S}}\|P_{\pi_b}^k(s,\cdot)-\mu(\cdot)\|_{\text{TV}}\leq \delta\}$ be the mixing time of the Markov chain $\{S_k\}$ (induced by $\pi_b$) with precision $\delta>0$, then under Assumption \ref{as:MC} we have $t_\delta=\mathcal{O}(\log(1/\delta))$. 

Under Assumption \ref{as:MC}, the Markov chain $\{(S_k,A_k)\}$ also has a unique stationary distribution.
Let $D\in \mathbb{R}^{|\mathcal{S}||\mathcal{A}|\times|\mathcal{S}||\mathcal{A}|}$ be a diagonal matrix with the unique stationary distribution of $\{(S_k,A_k)\}$ on its diagonal, i.e., $D((s,a),(s,a))=\mu(s)\pi_b(a|s)$ for all $(s,a)$. Moreover, let a norm $\|\cdot\|_D$ be defined by $\|x\|_D=(x^\top Dx)^{1/2}$. Denote $\lambda_{\min}$ as the minimum eigenvalue of the positive definite matrix $\Phi^\top D\Phi$. 

Let $\text{Proj}_{\mathcal{W}}(\cdot)$ be the projection operator onto the linear sub-space $\mathcal{W}$ with respect to the weighted $\ell_2$-norm $\|\cdot\|_D$. Note that $\text{Proj}_{\mathcal{W}}(\cdot)$ is explicitly given by $\text{Proj}_{\mathcal{W}}(Q)=\Phi(\Phi^\top D\Phi)^{-1}\Phi^\top DQ$ for any $Q\in\mathbb{R}^{|\mathcal{S}||\mathcal{A}|}$. Let $\mathcal{E}_{\text{approx}}:=\sup_{Q\in\mathcal{W}:\|Q\|_\infty\leq r}\|\ceil{\text{Proj}_{\mathcal{W}}\mathcal{H}(Q)}-\mathcal{H}(Q)\|_\infty$, which captures the approximation power of the chosen function class. Denote $\hat{Q}_t=\ceil{\Phi \hat{\theta}_t}$ as the truncated $Q$-function estimate associated with the target network $\hat{\theta}_t$. 

We next present the finite-sample bounds. For ease of exposition, we only present the case where we use constant stepsize in the inner-loop of Algorithm \ref{alg:main}, i.e., $\alpha_k\equiv \alpha$. The results for using various diminishing stepsizes are straightforward extensions \cite{chen2019finitesample}.

\begin{theorem}\label{thm:main}
Consider $\hat{\theta}_T$ of Algorithm \ref{alg:main}. Suppose that Assumption \ref{as:MC} is satisfied, the constant stepsize $\alpha$ is chosen such that $\alpha\leq \frac{\lambda_{\min}(1-\gamma)^2}{130}$, and $K\geq t_\alpha+1$. Then we have for any $T\geq 0$ that
\begin{align}
    \mathbb{E}[\|\hat{Q}_T-Q^*\|_\infty]
    \leq\;& \underbrace{\gamma^T\|\hat{Q}_0-Q^*\|_\infty}_{E_1: \text{ Error due to fixed-point iteration}}+\underbrace{\frac{2(1-\lambda_{\min}\alpha)^{\frac{K-t_\alpha-1}{2}}}{\lambda_{\min}^{1/2}(1-\gamma)^2}}_{E_2:\text{ Bias in the inner-loop}}\nonumber\\
    &+\underbrace{\frac{24\sqrt{\alpha (t_\alpha+1)}}{\lambda_{\min}(1-\gamma)^2}}_{E_3: \text{ Variance in the inner-loop}}+\underbrace{\frac{\mathcal{E}_{\text{approx}}}{1-\gamma}.}_{E_4: \text{ Function approximation error}}\label{eq:finite-sample-bounds}
\end{align}
As a result, to obtain $\mathbb{E}[\|\hat{Q}_T-Q^*\|_\infty]\leq \epsilon+\frac{\mathcal{E}_{\text{approx}}}{1-\gamma}$ for a given accuracy $\epsilon$, the sample complexity is
\begin{align*}
    \mathcal{O}\left(\frac{\log^2(1/\epsilon)}{\epsilon^2}\right)\Tilde{\mathcal{O}}\left(\frac{1}{(1-\gamma)^4}\right).
\end{align*}
\end{theorem}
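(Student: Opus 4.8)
The plan is to decompose the error $\hat Q_T - Q^*$ according to the four sources labeled $E_1,\dots,E_4$ in the theorem statement, and to control them by a combination of a fixed-point argument in the outer loop and a stochastic-approximation argument in the inner loop. The crucial structural observation is that, because the target network $\hat\theta_t$ is frozen during the inner loop, the inner loop (lines 3--6) is exactly a linear stochastic approximation scheme with Markovian noise whose deterministic limit is the solution of $\Phi^\top D \Phi \theta = \Phi^\top D \mathcal H(\hat Q_t)$, i.e. $\theta^*_t$ with $\Phi\theta^*_t = \operatorname{Proj}_{\mathcal W}\mathcal H(\hat Q_t)$. Truncation then gives $\hat Q_{t+1} = \lceil \Phi\theta_{t,K}\rceil \approx \lceil \operatorname{Proj}_{\mathcal W}\mathcal H(\hat Q_t)\rceil$. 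So modulo errors, the outer loop implements the iteration $\hat Q_{t+1} = \lceil \operatorname{Proj}_{\mathcal W}\mathcal H(\hat Q_t)\rceil$, and the whole proof is about making ``$\approx$'' quantitative and then iterating a contraction.

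First I would establish the key contraction lemma: the map $Q \mapsto \lceil \operatorname{Proj}_{\mathcal W}\mathcal H(Q)\rceil$ is a $\gamma$-contraction in $\|\cdot\|_\infty$ on the ball $\{\|Q\|_\infty \le r\}$, and that ball is invariant. Invariance is immediate since $\lceil\cdot\rceil$ lands in $[-r,r]^{|\mathcal S||\mathcal A|}$. For the contraction, the Bellman optimality operator $\mathcal H$ is a $\gamma$-contraction in $\|\cdot\|_\infty$; truncation $\lceil\cdot\rceil$ is nonexpansive in $\|\cdot\|_\infty$ (projection onto a box); the delicate point is that $\operatorname{Proj}_{\mathcal W}$, while nonexpansive in $\|\cdot\|_D$, is \emph{not} nonexpansive in $\|\cdot\|_\infty$ — this is precisely where the naive "target network alone" argument breaks and where truncation earns its keep. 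The resolution is that we never need $\operatorname{Proj}_{\mathcal W}$ to be $\|\cdot\|_\infty$-nonexpansive on all of $\mathbb R^{|\mathcal S||\mathcal A|}$; we compose it with truncation, and $\|\lceil x\rceil - \lceil y\rceil\|_\infty \le \|x-y\|_\infty$ but also, more usefully, we can bound $\|\lceil\operatorname{Proj}_{\mathcal W}\mathcal H(Q_1)\rceil - \lceil\operatorname{Proj}_{\mathcal W}\mathcal H(Q_2)\rceil\|_\infty$ by writing $\operatorname{Proj}_{\mathcal W}\mathcal H(Q_i) = \operatorname{Proj}_{\mathcal W}(\mathcal H(Q_i))$ and using that $\mathcal H(Q_i)$ differ by at most $\gamma\|Q_1-Q_2\|_\infty$ in $\|\cdot\|_\infty$ hence in $\|\cdot\|_D$, giving a $\|\cdot\|_D$-bound on the projections, then converting back; the truncation caps the $\|\cdot\|_\infty$ norm so the norm-equivalence constant does not blow up the recursion — carefully, one adds and subtracts $\mathcal H(Q)$ itself (an $\|\cdot\|_\infty$ object) to route the comparison through $\mathcal E_{\text{approx}}$. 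This interplay, making the $\|\cdot\|_\infty$ contraction survive the detour through $\mathcal W$, is the main obstacle.

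Next I would bound the inner-loop error: for fixed $\hat\theta_t$ with $\|\hat Q_t\|_\infty\le r$, the iterates $\theta_{t,k}$ are generated by constant-stepsize linear SA with Markovian noise, target $\theta^*_t$. Standard arguments (as in \cite{chen2019finitesample} with the stepsize condition $\alpha \le \lambda_{\min}(1-\gamma)^2/130$ and burn-in $K\ge t_\alpha+1$) give $\mathbb E[\|\theta_{t,K}-\theta^*_t\|_D^2]^{1/2} \lesssim (1-\lambda_{\min}\alpha)^{(K-t_\alpha-1)/2}\|\theta^*_t\|_D + \sqrt{\alpha(t_\alpha+1)}$, where the two terms become $E_2$ and $E_3$ after dividing by $\lambda_{\min}^{1/2}$ (to pass from $\|\cdot\|_D$ on $\Phi\theta$ back to... ) and using $\|\Phi\theta\|_\infty \le \|\Phi\theta\|_D/\sqrt{\lambda_{\min}}$-type bounds together with nonexpansiveness of $\lceil\cdot\rceil$. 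Here I would need: a uniform bound on $\|\theta^*_t\|$ (follows from $\|\operatorname{Proj}_{\mathcal W}\mathcal H(\hat Q_t)\|_D \le \|\mathcal H(\hat Q_t)\|_D \le r$ and invertibility of $\Phi^\top D\Phi$), a uniform bound on the noise/iterates (the truncation inside the update keeps the TD target bounded by $O(r)$, so the drift and noise moments are controlled — this is the second place truncation is essential), and the contraction property of the expected update matrix $\Phi^\top D\Phi \succeq \lambda_{\min} I$.

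Finally I would assemble the recursion. Writing $e_t := \mathbb E[\|\hat Q_t - \widetilde Q\|_\infty]$ where $\widetilde Q$ is the fixed point of $Q\mapsto\lceil\operatorname{Proj}_{\mathcal W}\mathcal H(Q)\rceil$, the inner-loop bound plus the contraction lemma give $e_{t+1} \le \gamma e_t + (\text{inner-loop error at step }t)$, which unrolls to $e_T \le \gamma^T e_0 + \frac{1}{1-\gamma}\cdot(\text{worst inner-loop error})$, producing $E_2$ and $E_3$ with the $1/(1-\gamma)$ factors from the geometric sum absorbed into the constants shown. Then I would translate from distance-to-$\widetilde Q$ to distance-to-$Q^*$: since $\mathcal H(Q^*)=Q^*$ and $\|Q^*\|_\infty\le r$, one has $\|\widetilde Q - Q^*\|_\infty = \|\lceil\operatorname{Proj}_{\mathcal W}\mathcal H(\widetilde Q)\rceil - Q^*\|_\infty \le \gamma\|\widetilde Q - Q^*\|_\infty + \|\lceil\operatorname{Proj}_{\mathcal W}\mathcal H(Q^*)\rceil - \mathcal H(Q^*)\|_\infty \le \gamma\|\widetilde Q - Q^*\|_\infty + \mathcal E_{\text{approx}}$, hence $\|\widetilde Q - Q^*\|_\infty \le \mathcal E_{\text{approx}}/(1-\gamma)$, which is $E_4$; a triangle inequality and $\gamma^T\|\hat Q_0 - \widetilde Q\|_\infty \le \gamma^T\|\hat Q_0 - Q^*\|_\infty + \gamma^T E_4$ fold everything into the stated bound \eqref{eq:finite-sample-bounds} (with the $\gamma^T E_4$ absorbed). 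The sample-complexity corollary then follows by choosing $\alpha \asymp \epsilon^2(1-\gamma)^4$, $K \asymp \alpha^{-1}\log(1/\epsilon)$ (so $E_3 \le \epsilon/4$ and $E_2$ decays), and $T \asymp \log(1/\epsilon)/\log(1/\gamma)$ (so $E_1\le\epsilon/4$), giving total samples $TK = \tilde{\mathcal O}(\epsilon^{-2}(1-\gamma)^{-4})$.
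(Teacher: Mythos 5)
Your overall architecture (inner loop is linear SA with limit $\theta_t^*$ satisfying $\Phi\theta_t^*=\operatorname{Proj}_{\mathcal W}\mathcal H(\hat Q_t)$, outer loop is a perturbed fixed-point iteration, then combine) matches the paper, but your ``key contraction lemma'' is a genuine gap. You claim that $Q\mapsto\ceil{\operatorname{Proj}_{\mathcal W}\mathcal H(Q)}$ is a $\gamma$-contraction in $\|\cdot\|_\infty$ on the ball $B_r$. This is exactly what is \emph{not} available: $\operatorname{Proj}_{\mathcal W}$ is nonexpansive only in $\|\cdot\|_D$, and composing with the truncation does not repair the norm mismatch, because truncation only caps the absolute size of the iterates, not the ratio $\|\ceil{\operatorname{Proj}_{\mathcal W}x}-\ceil{\operatorname{Proj}_{\mathcal W}y}\|_\infty/\|x-y\|_\infty$; converting $\|\cdot\|_D\to\|\cdot\|_\infty$ costs a factor like $(\min_{s,a}D(s,a))^{-1/2}$ that truncation does not remove, so the Lipschitz constant of the composite map is not $\gamma$ in general. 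Indeed, the paper explicitly leaves open (in its discussion of future work) whether the truncated projected Bellman equation $Q=\ceil{\operatorname{Proj}_{\mathcal W}\mathcal H(Q)}$ has a unique solution and whether its fixed-point iteration converges --- precisely because no such contraction is known; your lemma, if true, would settle that question. Your fallback device of ``adding and subtracting $\mathcal H(Q)$ to route through $\mathcal E_{\text{approx}}$'' does not rescue the lemma: it yields only the quasi-contraction $\|\ceil{\operatorname{Proj}_{\mathcal W}\mathcal H(Q_1)}-\ceil{\operatorname{Proj}_{\mathcal W}\mathcal H(Q_2)}\|_\infty\le\gamma\|Q_1-Q_2\|_\infty+2\mathcal E_{\text{approx}}$, under which your intermediate fixed point $\widetilde Q$ is not even guaranteed unique, and unrolling the recursion around $\widetilde Q$ and then adding $\|\widetilde Q-Q^*\|_\infty\le\mathcal E_{\text{approx}}/(1-\gamma)$ accumulates roughly $3\mathcal E_{\text{approx}}/(1-\gamma)$ rather than the single $\mathcal E_{\text{approx}}/(1-\gamma)$ in the theorem.

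The fix is to drop the intermediate point $\widetilde Q$ altogether and compare to $Q^*$ directly, which is what the paper does: write $\hat Q_t-Q^*=\bigl(\mathcal H(\hat Q_{t-1})-\mathcal H(Q^*)\bigr)+\bigl(\hat Q_t-\ceil{\operatorname{Proj}_{\mathcal W}\mathcal H(\hat Q_{t-1})}\bigr)+\bigl(\ceil{\operatorname{Proj}_{\mathcal W}\mathcal H(\hat Q_{t-1})}-\mathcal H(\hat Q_{t-1})\bigr)$, use only the $\gamma$-contraction of $\mathcal H$ on the first term, bound the third by $\mathcal E_{\text{approx}}$, and telescope; this needs no contraction property of the composite operator and gives the stated constant. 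A smaller slip in your inner-loop step: the conversion back to $\|\cdot\|_\infty$ should not go through ``$\|\Phi\theta\|_\infty\le\|\Phi\theta\|_D/\sqrt{\lambda_{\min}}$'' (that inequality is not what norm equivalence gives here); the clean route is $\|\Phi(\theta_{t,K}-\theta_t^*)\|_\infty\le\|\theta_{t,K}-\theta_t^*\|_\infty\le\|\theta_{t,K}-\theta_t^*\|_2$ using $\|\phi(s,a)\|_1\le1$, combined with the $\ell_2$ mean-square SA bound and the estimate $\|\theta_t^*\|_2\le\lambda_{\min}^{-1/2}(1-\gamma)^{-1}$; that is where the $\lambda_{\min}$ factors in $E_2$ and $E_3$ actually come from.
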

\begin{remark}
While commonly used in existing literature studying RL with function approximation, it was argued in \cite{khodadadian2021finite2} that sample complexity is strictly speaking not well-defined when the asymptotic error is non-zero. Here we present the ``sample complexity'' in the same sense as in existing literature to enable a fair comparison.

\end{remark}
Theorem \ref{thm:main} is by far the strongest result of $Q$-learning with linear function approximation in the literature in that it achieves the optimal $\Tilde{\mathcal{O}}(\epsilon^{-2})$ sample complexity without needing strong assumptions or modifying the problem parameters.

In our finite-sample bound, the term $E_1$ goes to zero geometrically fast as $T$ goes to infinity. In fact, the term $E_1$ captures the error due to fixed-point iteration. That is, if we had a complete basis (hence no function approximation error), and were able to perform value iteration to solve the Bellman equation $Q^*=\mathcal{H}(Q^*)$ (hence no stochastic error), $E_1$ is the only error term. 

The terms $E_2$ and $E_3$ represent the bias and variance in the inner-loop of Algorithm \ref{alg:main}. Since the target network parameter $\hat{\theta}_t$ is fixed in the inner-loop, the update equation in Algorithm \ref{alg:main} line 5 can be viewed as a linear stochastic approximation algorithm under Markovian noise.  When using constant stepsize, the bias goes to zero geometrically fast as $K$ goes to infinity but the variance is a constant proportional to $\sqrt{\alpha t_\alpha}$. Since geometric mixing implies $t_\alpha=\mathcal{O}(\log(1/\alpha))$, the term $\sqrt{\alpha t_\alpha}$ can be made arbitrarily small by using small enough constant stepsize. This agrees with existing literature studying linear stochastic approximation \cite{srikant2019finite}. When using diminishing stepsizes with a suitable decay rate, one can easily show using results in \cite{chen2019finitesample} that both $E_1$ and $E_2$ go to zero at a rate of $\mathcal{O}(1/\sqrt{K})$, therefore the resulting sample complexity is the same as when using constant stepsize. 

The term $E_4$ captures the error due to using function approximation. Recall that we define  $\mathcal{E}_{\text{approx}}=\sup_{Q\in\mathcal{W}:\|Q\|_\infty\leq r}\|\ceil{\text{Proj}_{\mathcal{W}}\mathcal{H}(Q)}-\mathcal{H}(Q)\|_\infty$. Therefore to make the function approximation error small, one only needs to approximate the functions that are one-step reachable under the Bellman operator. In addition, using truncation also helps reducing the function approximation error to some extend since $\|\ceil{\text{Proj}_{\mathcal{W}}\mathcal{H}(Q)}-\mathcal{H}(Q)\|_\infty\leq \|\text{Proj}_{\mathcal{W}}\mathcal{H}(Q)-\mathcal{H}(Q)\|_\infty$ for any $Q$ such that $\|Q\|_\infty\leq r$. The $1/(1-\gamma)$ factor in $E_4$ also appears in TD-learning with linear function approximation \cite{tsitsiklis1997analysis}, where it was shown to be not removable in general. Observe that $E_4$ vanishes (and hence we have convergence to $Q^*$) when (1)  we are in the tabular setting, or (2) we use a complete basis (i.e., $\Phi$ being an invertible matrix), or (3) under the completeness assumption in existing literature, which requires $\mathcal{H}(Q)\in\mathcal{W}$ whenever $Q\in\mathcal{W}$. In existing work \cite{carvalho2020new,zhang2021breaking}, the algorithm does not converge to $Q^*$ even in the tabular setting (see Appendix \ref{ap:literature}).

\section{The reason that Target Network and Truncation Stabilize $Q$-Learning}\label{sec:roadmap}
In the previous section, we presented the algorithm and the finite-sample guarantees. In this section, we elaborate in detail why target network and truncation together are enough to stabilize $Q$-learning.

\textbf{Summary.} We start with the classical semi-gradient $Q$-learning with linear function approximation algorithm in Section \ref{subsec:classical_Q}, which unfortunately is not necessarily stable, as evidenced by the divergent counter-example constructed in \cite{baird1995residual}. In Section \ref{subsec:TN}, We show that by adding target network to $Q$-learning, the resulting algorithm successfully overcomes the divergence in the MDP example in \cite{baird1995residual}. However, beyond the example in \cite{baird1995residual}, target network alone is not sufficient to stabilize $Q$-learning. In fact, we show in Section \ref{subsec:insufficiency_of_TN} that $Q$-learning with target network diverges for another MDP example constructed in \cite{chen2019finitesample}. In Section \ref{subsec:TC}, we show that by further adding truncation, the resulting algorithm (i.e., Algorithm \ref{alg:main}) is provably stable and achieves the optimal $\Tilde{\mathcal{O}}(\epsilon^{-2})$ sample complexity. The reason that truncation successfully stabilizes $Q$-learning is due to an insightful observation regarding the relation between truncation and projection.

\subsection{Classical Semi-Gradient $Q$-Learning}\label{subsec:classical_Q}
We begin with the classical semi-gradient $Q$-learning with linear function approximation algorithm \citep{bertsekas1996neuro,sutton2018reinforcement}. With a trajectory of samples $\{(S_k,A_k)\}$ collected under the behavior policy $\pi_b$ and an arbitrary initialization $\theta_0$, the semi-gradient $Q$-learning algorithm updates the parameter $\theta_k$  according to the following formula:

\begin{align}
    \theta_{k+1}=\theta_k+\alpha_k\phi(S_k,A_k)\left(\mathcal{R}(S_k,A_k)+\gamma\max_{a'\in\mathcal{A}}\phi(S_{k+1},a')^\top \theta_k-\phi(S_k,A_k)^\top \theta_k\right).\label{alg:semi_gradient_QLFA}
\end{align}
The reason that update (\ref{alg:semi_gradient_QLFA}) is called semi-gradient $Q$-learning is that it can be interpreted as a one step stochastic semi-gradient descent for minimizing the Bellman error. See \cite{bertsekas1996neuro} for more details. Unfortunately, Algorithm (\ref{alg:semi_gradient_QLFA}) does not necessarily converge, as evidenced by the divergent example provided in \cite{baird1995residual}. The MDP example contructed in \cite{baird1995residual} has $7$ states and $2$ actions. To perform linear function approximation, $14$ linearly independent basis vectors are chosen. See Appendix \ref{ap:Baird} for more details about this MDP. The important thing to notice about this example is that the number of basis vectors is equal to the size of the state-action space, i.e., $d=|\mathcal{S}||\mathcal{A}|$. Hence rather than doing function approximation, we are essentially doing a change of basis. Surprisingly even in this setting, Algorithm (\ref{alg:semi_gradient_QLFA}) diverges. Due to the divergence nature, \cite{melo2008analysis,chen2019finitesample,lee2019unified} impose strong negative drift assumptions to ensure the stability of Algorithm (\ref{alg:target_network_QLFA}).

By viewing Algorithm (\ref{alg:semi_gradient_QLFA}) as a stochastic approximation algorithm, the target equation Algorithm (\ref{alg:semi_gradient_QLFA}) is trying to solve is $\mathbb{E}_{S_k\sim\mu,A_k\sim\pi_b(\cdot|S_k)}[\phi(S_k,A_k)(\mathcal{R}(S_k,A_k)+\gamma\max_{a'\in\mathcal{A}}\phi(S_{k+1},a')^\top \theta-\phi(S_k,A_k)^\top \theta)]=0$.
The previous equation can be written compactly using the Bellman optimality operator $\mathcal{H}(\cdot)$ and the diagonal matrix $D$ as
\begin{align}\label{eq:30}
    \Phi^\top D(\mathcal{H}(\Phi\theta)-\Phi \theta)=0,
\end{align}
and is further equivalent to the fixed-point equation
\begin{align}\label{eq:31}
    \theta=\mathcal{H}_{\Phi}(\theta),
\end{align}
where the operator $\mathcal{H}_\Phi:\mathbb{R}^d\mapsto\mathbb{R}^d$ is defined by $\mathcal{H}_{\Phi}(\theta)=(\Phi^\top D\Phi)^{-1}\Phi^\top D\mathcal{H}(\Phi \theta)$. Eq. (\ref{eq:31}) is closely related to the so-called projected Bellman equation. To see this, since $\Phi$ is assumed to have linearly independent columns, Eq. (\ref{eq:31}) is equivalent to 
\begin{align}
    \Phi\theta=\Phi(\Phi^\top D\Phi)^{-1}\Phi^\top D\mathcal{H}(\Phi \theta)=\text{Proj}_{\mathcal{W}}\mathcal{H}(\Phi\theta),\label{eq:PBE}
\end{align}
where $\text{Proj}_{\mathcal{W}}$ denotes the projection operator onto the linear subspace $\mathcal{W}$ (which is spanned by the columns of $\Phi$) with respect to the weighted $\ell_2$-norm $\|\cdot\|_D$. 

We next show that in the complete basis setting, i.e., $d=|\mathcal{S}||\mathcal{A}|$, which covers the Baird's counter-example as a special case, the operator $\mathcal{H}_\Phi(\cdot)$ is in fact a contraction mapping with $\theta^*=\Phi^{-1}Q^*$ being its unique fixed-point. This implies that the design of the classical semi-gradient $Q$-learning algorithm (\ref{alg:semi_gradient_QLFA}) is flawed because \textit{if it were designed as a stochastic approximation algorithm which is in effect performing fixed-point iteration to solve Eq. (\ref{eq:31}), it would converge}. Instead, it was designed as a stochastic approximation algorithm based on Eq. (\ref{eq:30}). While Eq. (\ref{eq:30}) is equivalent to Eq. (\ref{eq:31}), their corresponding stochastic approximation algorithms have different behavior in terms of their convergence or divergence.

To show the contraction property of $\mathcal{H}_\Phi(\cdot)$, first observe that in the complete basis setting we have $\mathcal{H}_{\Phi}(\theta)=(\Phi^\top D\Phi)^{-1}\Phi^\top D\mathcal{H}(\Phi \theta)=\Phi^{-1}\mathcal{H}(\Phi\theta)$. Let $\|\cdot\|_{\Phi,\infty}$ be a norm on $\mathbb{R}^d$ defined by $\|\theta\|_{\Phi,\infty}=\|\Phi \theta\|_\infty$ (the fact that it is indeed a norm can be easily verified). Then we have
\begin{align*}
    \|\mathcal{H}_\Phi(\theta_1)-\mathcal{H}_\Phi(\theta_2)\|_{\Phi,\infty}=\|\mathcal{H}(\Phi\theta_1)-\mathcal{H}(\Phi\theta_2)\|_\infty
    \leq \gamma \|\Phi(\theta_1-\theta_2)\|_\infty=\gamma \|\theta_1-\theta_2\|_{\Phi,\infty}
\end{align*}
for all $\theta_1,\theta_2\in\mathbb{R}^d$,
where the inequality follows from the Bellman optimality operator $\mathcal{H}(\cdot)$ being an $\ell_\infty$-norm contraction mapping.
It follows that the operator $\mathcal{H}_\Phi(\cdot)$ is a contraction mapping with respect to $\|\cdot\|_{\Phi,\infty}$. Moreover, since $\mathcal{H}_\Phi(\theta^*)=\Phi^{-1}\mathcal{H}(\Phi\theta^*)=\Phi^{-1}\mathcal{H}(Q^*)=\Phi^{-1}Q^*=\theta^*$, the point $\theta^*$ is the unique fixed-point of the operator $\mathcal{H}_\Phi(\cdot)$. The previous analysis suggests that we should aim at designing $Q$-learning with linear function approximation algorithm as a fixed-point iteration (implemented in a stochastic manner due to sampling in RL) to solve Eq. (\ref{eq:31}). The resulting algorithm would at least converge for the Baird's MDP example.

\subsection{Introducing Target Network}\label{subsec:TN}
We begin with the following fixed-point iteration for solving the fixed-point equation (\ref{eq:31}):
\begin{align}\label{eq:fi2}
    \theta_{k+1}=(\Phi^\top D\Phi)^{-1}\Phi^\top D\mathcal{H}(\Phi\theta_k),
\end{align}
where we write $\mathcal{H}_\Phi(\cdot)$ explicitly in terms of $\Phi$, $D$, and $\mathcal{H}(\cdot)$. Update (\ref{eq:fi2}) is what we would like to perform if we had complete information about the dynamics of the underlying MDP.
The question is that if there is a stochastic variant of such fixed-point iteration that can be actually implemented in the RL setting where the transition probabilities and the stationary distribution are unknown. The answer is $Q$-learning with target network.

\begin{algorithm}[h]\caption{$Q$-Learning with Linear Function Approximation: Target Network and No Truncation}\label{alg:target_network_QLFA}
\begin{algorithmic}[1] 
	\STATE {\bfseries Input:} Integers $T$, $K$, initializations $\theta_{t,0}=\bm{0}$ for all $t=0,1,...,T-1$ and $\hat{\theta}_0=\bm{0}$, behavior policy $\pi_b$
	\FOR{$t=0,1,\cdots,T-1$}
	\FOR{$k=0,1,\cdots,K-1$}
	\STATE Sample $A_k\sim \pi_b(\cdot|S_k)$, $S_{k+1}\sim P_{A_k}(S_k,\cdot)$
	\STATE $\theta_{t,k+1}=\theta_{t,k}+\alpha_k\phi(S_k,A_k)(\mathcal{R}(S_k,A_k)+\gamma\max_{a'\in\mathcal{A}}\phi(S_{k+1},a')^\top \hat{\theta}_t-\phi(S_k,A_k)^\top \theta_{t,k})$
	\ENDFOR
	\STATE $\hat{\theta}_{t+1}=\theta_{t,K}$
	\STATE $S_0=S_K$
	\ENDFOR
	\STATE\textbf{Output:} $\hat{\theta}_T$
\end{algorithmic}
\end{algorithm}

We next elaborate on why Algorithm \ref{alg:target_network_QLFA} can be viewed as a stochastic variant of the fixed-point iteration (\ref{eq:fi2}). Consider the update equation (line 5) in the inner-loop of Algorithm \ref{alg:target_network_QLFA}. Since the target network is fixed in the inner-loop, the update equation in terms of $\theta_{t,k}$ is in fact a linear stochastic approximation algorithm for solving the following linear system of equations: 
\begin{align}\label{eq:linear}
-\Phi^\top D\Phi\theta+\Phi^\top D\mathcal{H}(\Phi \hat{\theta}_t)=0.
\end{align}
Since the matrix $-\Phi^\top D\Phi$ is negative definite, the asymptotic convergence of the inner-loop update follows from standard results in the literature \citep{bertsekas1996neuro}. Therefore, when the stepsize sequence $\{\alpha_k\}$ is appropriately chosen and $K$ is large, we expect $\theta_{t,K}$ to approximate the solution of Eq. (\ref{eq:linear}), i.e., $\theta_{t,K}\approx (\Phi^\top D\Phi)^{-1}\Phi^\top D\mathcal{H}(\Phi \hat{\theta}_t)$.
Now in view of line 7 of Algorithm \ref{alg:target_network_QLFA}, the target network $\hat{\theta}_{t+1}$ is synchronized to $\theta_{t,K}$. Therefore $Q$-learning with target network is in effect performing a stochastic variant of the fixed-point iteration (\ref{eq:fi2}).

Note that on an aside, $Q$-learning with target network can be viewed as an online version of fitted $Q$-iteration. To see this, recall that in the linear function approximation setting, fitted $Q$-iteration updates the corresponding parameter $\{\Tilde{\theta}_t\}$ iteratively according to
\begin{align}\label{eq:FQI1}
    \Tilde{\theta}_{t+1}&={\arg\min}_{\Tilde{\theta}\in\mathbb{R}^d}\frac{1}{|\mathcal{N}|}\sum_{(s,a,s')\in \mathcal{N}}\left(\phi(s,a)^\top \Tilde{\theta}-\mathcal{R}(s,a)-\gamma \max_{a'\in\mathcal{A}}\phi(s',a')^\top \Tilde{\theta}_t\right)^2,
\end{align}
where $\mathcal{N}=\{(s,a,s')\}$ is a batch dataset generated in an i.i.d. manner as follows: $s\sim \mu(\cdot)$, $a\sim \pi_b(\cdot|s)$, and $s'\sim P_a(s,\cdot)$. Observe that Eq. (\ref{eq:FQI1}) is an empirical version of 
\begin{align}\label{eq:FQI2}
    \Tilde{\theta}_{t+1}&={\arg\min}_{\Tilde{\theta}\in\mathbb{R}^d}\|\Phi\Tilde{\theta}-\mathcal{H}(\Phi \Tilde{\theta}_t)\|_D^2.
\end{align}
In light of Eq. (\ref{eq:FQI2}), the inner-loop of Algorithm \ref{alg:target_network_QLFA} can be viewed as a stochastic gradient descent algorithm for solving the optimization problem in Eq. (\ref{eq:FQI2}) with a single trajectory of Markovian samples.

Revisiting Baird's counter-example (where $d=|\mathcal{S}||\mathcal{A}|$), recall that the fixed-point iteration (\ref{eq:fi2}) reduces to $\theta_{k+1}=\Phi^{-1}\mathcal{H}(\Phi\theta_k)=\mathcal{H}_\Phi(\theta_k)$. Since the operator $\mathcal{H}_\Phi(\cdot)$ is a contraction mapping as shown in Section \ref{subsec:classical_Q}, the fixed-point iteration (\ref{eq:fi2}) provably converges. As a result, $Q$-learning with target network as a stochastic variant of the fixed-point iteration (\ref{eq:fi2}) also converges.

\begin{proposition}\label{prop:a.s.}
Consider Algorithm \ref{alg:target_network_QLFA}. Suppose that Assumption \ref{as:MC} is satisfied, the feature matrix $\Phi$ is a square matrix (i.e., $d=|\mathcal{S}||\mathcal{A}|$), $\alpha_k\equiv \alpha\leq \frac{\lambda_{\min}(1-\gamma)^2}{130}$, and $K\geq t_\alpha+1$. Then the sample complexity to achieve $\mathbb{E}[\|\Phi \hat{\theta}_T-Q^*\|_\infty]<\epsilon$ is $\Tilde{\mathcal{O}}(\epsilon^{-2})$.
\end{proposition}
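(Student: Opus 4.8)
# Proof Proposal for Proposition \ref{prop:a.s.}

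The plan is to specialize Theorem \ref{thm:main} to the complete-basis setting and observe that the function approximation error vanishes, leaving only the three stochastic/iteration error terms which are each controllable to the desired accuracy.

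First I would verify that when $\Phi$ is square (hence invertible, since its columns are linearly independent), truncation becomes irrelevant to the fixed point: the projected Bellman operator $\mathcal{H}_\Phi$ equals $\Phi^{-1}\mathcal{H}(\Phi\,\cdot\,)$, whose fixed point is $\theta^* = \Phi^{-1}Q^*$, and $\|Q^*\|_\infty \le r = 1/(1-\gamma)$ so $\ceil{\mathrm{Proj}_{\mathcal{W}}\mathcal{H}(Q)} = \mathrm{Proj}_{\mathcal{W}}\mathcal{H}(Q)$ whenever the argument is the true fixed point. More precisely, I would show $\mathcal{E}_{\text{approx}} = \sup_{Q\in\mathcal{W}:\|Q\|_\infty\le r}\|\ceil{\mathrm{Proj}_{\mathcal{W}}\mathcal{H}(Q)} - \mathcal{H}(Q)\|_\infty = 0$: for square invertible $\Phi$ we have $\mathrm{Proj}_{\mathcal{W}} = I$ on all of $\mathbb{R}^{|\mathcal{S}||\mathcal{A}|}$, so $\mathrm{Proj}_{\mathcal{W}}\mathcal{H}(Q) = \mathcal{H}(Q)$, and since $\|Q\|_\infty \le r$ implies $\|\mathcal{H}(Q)\|_\infty \le 1 + \gamma r = r$ (as $\mathcal{H}$ maps the $r$-ball into itself, $r$ being the fixed point's bound), the truncation acts as the identity. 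Hence $E_4 = 0$.

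With $E_4 = 0$, the argument is essentially a restatement of Theorem \ref{thm:main}. I would note that Algorithm \ref{alg:target_network_QLFA} is exactly Algorithm \ref{alg:main} run in the regime where the truncation $\ceil{\cdot}$ inside the inner-loop update never binds — or alternatively, I would remark that Theorem \ref{thm:main}'s bound applies verbatim to Algorithm \ref{alg:main} and, in the square-$\Phi$ case, $\hat{Q}_t = \ceil{\Phi\hat{\theta}_t} = \Phi\hat{\theta}_t$ provided the iterates stay bounded, which truncation guarantees. Under the stated hypotheses ($\alpha \le \lambda_{\min}(1-\gamma)^2/130$, $K \ge t_\alpha+1$), Theorem \ref{thm:main} gives
\begin{align*}
\mathbb{E}[\|\Phi\hat{\theta}_T - Q^*\|_\infty] \le \gamma^T\|Q^*\|_\infty + \frac{2(1-\lambda_{\min}\alpha)^{\frac{K-t_\alpha-1}{2}}}{\lambda_{\min}^{1/2}(1-\gamma)^2} + \frac{24\sqrt{\alpha(t_\alpha+1)}}{\lambda_{\min}(1-\gamma)^2}.
\end{align*}
Then I would choose parameters as in the proof of Theorem \ref{thm:main}: take $\alpha = \Theta(\epsilon^2/\log(1/\epsilon))$ (absorbing the $t_\alpha = \mathcal{O}(\log(1/\alpha))$ factor) so that $E_3 \le \epsilon/3$; take $K = \Theta(\alpha^{-1}\log(1/\epsilon))$ so that $E_2 \le \epsilon/3$; and take $T = \Theta(\log(1/\epsilon))$ so that $E_1 \le \epsilon/3$. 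The total sample count is $TK = \mathcal{O}(\log(1/\epsilon)) \cdot \mathcal{O}(\alpha^{-1}\log(1/\epsilon)) = \Tilde{\mathcal{O}}(\epsilon^{-2})$.

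I do not expect a genuine obstacle here, since the proposition is a corollary of Theorem \ref{thm:main}; the only point requiring care is the claim $\mathcal{E}_{\text{approx}} = 0$ in the square-$\Phi$ case, which rests on two facts — that $\mathrm{Proj}_{\mathcal{W}}$ is the identity when $\Phi$ is invertible, and that truncation at level $r = 1/(1-\gamma)$ does not bind on $\mathcal{H}(Q)$ for $Q$ in the $r$-ball. The latter follows from $\|\mathcal{H}(Q)\|_\infty \le \|\mathcal{R}\|_\infty + \gamma\|Q\|_\infty \le 1 + \gamma/(1-\gamma) = 1/(1-\gamma) = r$. Everything else is bookkeeping inherited from the general theorem.
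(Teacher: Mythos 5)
Your overall route is the intended one: with square, invertible $\Phi$ the projection $\text{Proj}_{\mathcal{W}}$ is the identity, $\mathcal{H}$ maps the ball $B_r$ into itself (so truncation at level $r=1/(1-\gamma)$ never affects the relevant fixed points), hence $\mathcal{E}_{\text{approx}}=0$, and the parameter bookkeeping $T=\Theta(\log(1/\epsilon))$, $\alpha=\tilde{\Theta}(\epsilon^2)$, $K=\tilde{\Theta}(\alpha^{-1}\log(1/\epsilon))$ yields $\Tilde{\mathcal{O}}(\epsilon^{-2})$ samples; this is exactly the spirit of the paper, whose proof of the proposition is simply declared to be identical to that of Theorem \ref{thm:main}.

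However, the step by which you transfer Theorem \ref{thm:main} to Algorithm \ref{alg:target_network_QLFA} has a genuine gap. Theorem \ref{thm:main} bounds $\mathbb{E}[\|\ceil{\Phi\hat{\theta}_T}-Q^*\|_\infty]$ for Algorithm \ref{alg:main}, whereas the proposition concerns $\mathbb{E}[\|\Phi\hat{\theta}_T-Q^*\|_\infty]$ for Algorithm \ref{alg:target_network_QLFA}, and your bridging claim that the two algorithms coincide because ``the truncation never binds'' is unjustified: even with square $\Phi$, the inner loop only approximates $\theta_t^*=\Phi^{-1}\mathcal{H}(\hat{Q}_t)$, and the stochastic error can push components of $\Phi\theta_{t,K}$ outside the $r$-ball, so in Algorithm \ref{alg:main} the truncation can bind and $\ceil{\Phi\hat{\theta}_t}\neq\Phi\hat{\theta}_t$. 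Likewise, ``the iterates stay bounded, which truncation guarantees'' does not help: Algorithm \ref{alg:main} truncates only the target values fed into the inner loop, not the parameter $\hat{\theta}_t$ itself, and Algorithm \ref{alg:target_network_QLFA} has no truncation at all. The correct repair---and what ``identical to the proof of Theorem \ref{thm:main}'' amounts to---is to rerun the two-loop analysis directly on Algorithm \ref{alg:target_network_QLFA}: the outer loop uses that $\text{Proj}_{\mathcal{W}}\mathcal{H}=\mathcal{H}$ is a $\gamma$-contraction in $\|\cdot\|_\infty$ with fixed point $Q^*$ (so no $\mathcal{E}_{\text{approx}}$ term appears), and the inner loop is the same Markovian linear stochastic approximation analysis for $-\Phi^\top D\Phi\theta+\Phi^\top D\mathcal{H}(\Phi\hat{\theta}_t)=0$. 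The only places truncation was actually used in the proof of Theorem \ref{thm:main} are the uniform bounds $\|F(\bm{0},y)\|_2\leq 1/(1-\gamma)$ and $\|\theta_t^*\|_2\leq 1/(\lambda_{\min}^{1/2}(1-\gamma))$; without truncation these constants involve $\|\Phi\hat{\theta}_t\|_\infty$, so you need an additional (short, but not free) argument that $\mathbb{E}[\|\Phi\hat{\theta}_t\|_\infty]$ remains uniformly bounded over the outer iterations---for instance via the recursion $\|\Phi\hat{\theta}_{t+1}\|_\infty\leq 1+\gamma\|\Phi\hat{\theta}_t\|_\infty+(\text{inner-loop error}_t)$ for sufficiently small $\alpha$---before the per-iteration inner-loop bounds can be summed as in the proof of Proposition \ref{prop:outer_loop}.
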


To further verify the stability, we conduct numerical simulations for the MDP example constructed in \cite{baird1995residual}. As we see, while classical semi-gradient $Q$-learning with linear function approximation diverges in Figure \ref{fig:Baird_divergence} (which agrees with \cite{baird1995residual}), $Q$-learning with target network converges as shown in Figure \ref{fig:Baird_TN}.

\begin{figure}[h]
\centering
\begin{minipage}{.4\textwidth}
  \includegraphics[width=.95\linewidth]{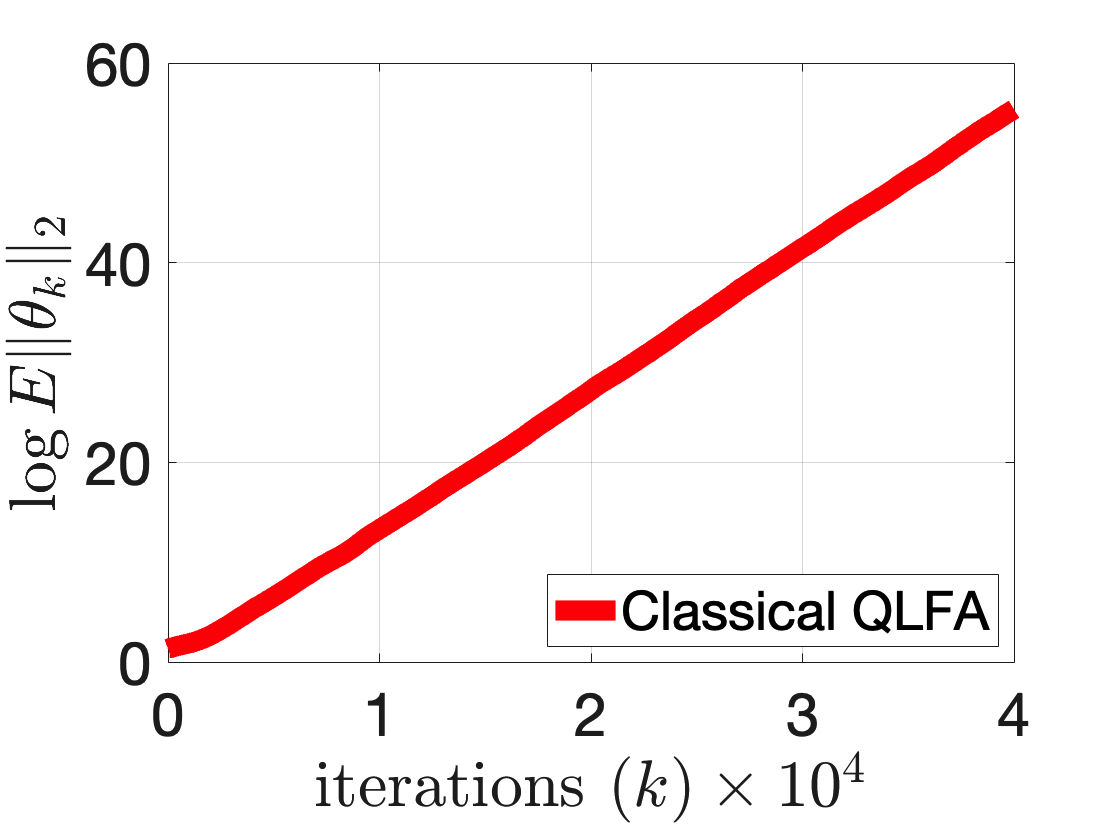}
  \caption{Classical Semi-Gradient $Q$-Learning}
  \label{fig:Baird_divergence}
\end{minipage}
\begin{minipage}{.4\textwidth}
  \includegraphics[width=.95\linewidth]{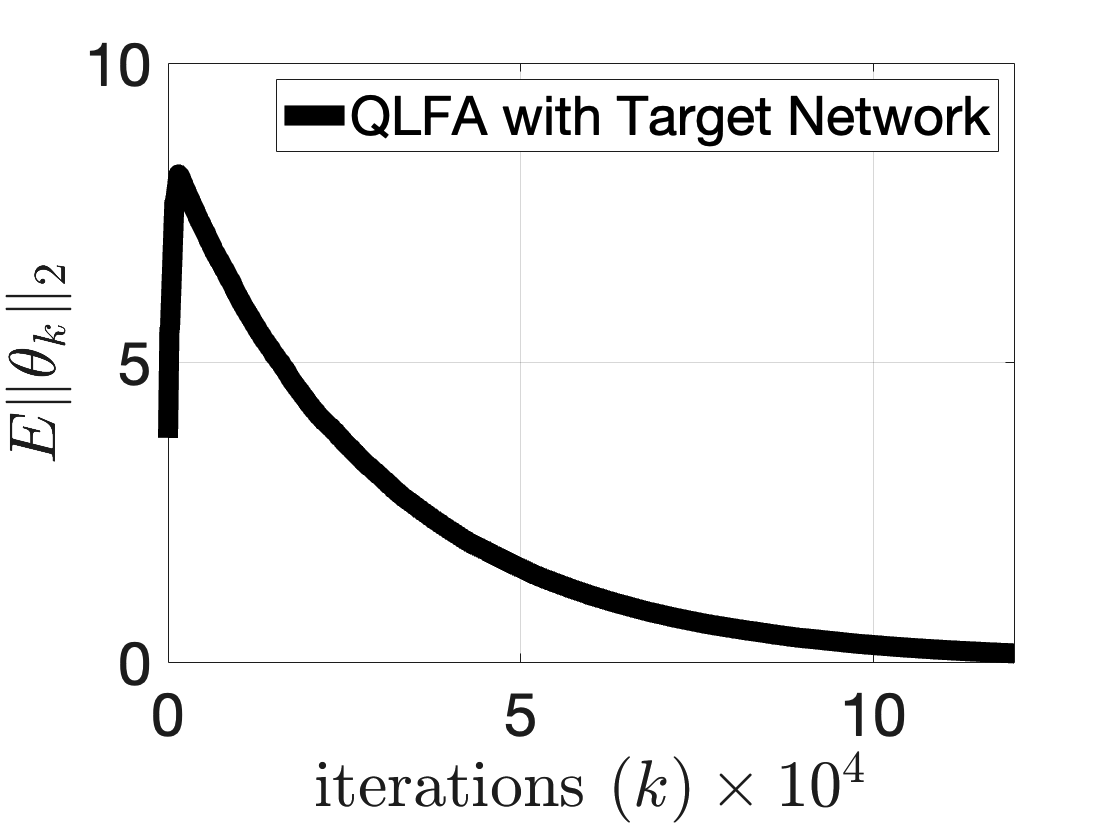}
  \caption{$Q$-Learning with Target Network}
  \label{fig:Baird_TN}
\end{minipage}
\end{figure}

\subsection{Insufficiency of Target Network}\label{subsec:insufficiency_of_TN}
The reason that $Q$-learning with target network overcomes the divergence for Baird's MDP example is essentially that \textit{the projected Bellman operator reduces to the regular Bellman operator (which is a contraction mapping) when we have a complete basis}. However, this is not the case in general. In the projected Bellman equation (\ref{eq:PBE}), the Bellman operator $\mathcal{H}(\cdot)$ is a contraction mapping with respect to the $\ell_\infty$-norm $\|\cdot\|_\infty$, and the projection operator $\text{Proj}_{\mathcal{W}}$ is a non-expansive mapping with respect to the projection norm, in this case the weighted $\ell_2$-norm $\|\cdot\|_D$. Due to the norm mismatch, the composed operator $\text{Proj}_{\mathcal{W}}\mathcal{H}(\cdot)$ in general is not a contraction mapping with respect to any norm. This is the \textit{fundamental reason} for the divergence of $Q$-learning with linear function approximation, and introducing target network alone does not overcome this issue, as evidenced by the following MDP example.

\begin{example}[MDP Example in \cite{chen2019finitesample}]\label{example1}
Consider an MDP with state-space $\mathcal{S}=\{s_1,s_2\}$ and action-space $\mathcal{A}=\{a_1,a_2\}$. Regardless of the present state, taking action $a_1$ results in state $s_1$ with probability $1$, and taking action $a_2$ results in state $s_2$ with probability $1$. The reward function is defined as $\mathcal{R}(s_1,a_1)=1$, $\mathcal{R}(s_1,a_2)=\mathcal{R}(s_2,a_1)=2$, and $\mathcal{R}(s_2,a_2)=4$. We construct the approximating linear sub-space with a single basis vector: $\Phi=[\phi(s_1,a_1),\phi(s_1,a_2),\phi(s_2,a_1),\phi(s_2,a_2)]^\top=[1,2,2,4]^\top$. The behavior policy is to take each action with equal probability. In this example, after straightforward calculation, we have the following result.

\begin{lemma}\label{le:PBE}
Eq. (\ref{eq:31}) is explicitly given by $\theta=1+\frac{9\gamma}{10}\theta+\frac{3\gamma\theta}{10} (\mathbb{I}_{\{\theta\geq 0\}}-\mathbb{I}_{\{\theta< 0\}})$.
\end{lemma}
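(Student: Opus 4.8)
The plan is to compute the operator $\mathcal{H}_\Phi(\theta) = (\Phi^\top D\Phi)^{-1}\Phi^\top D \mathcal{H}(\Phi\theta)$ explicitly for the given data, and then Eq.~(\ref{eq:31}) is simply $\theta = \mathcal{H}_\Phi(\theta)$. First I would assemble the pieces. The feature vector is $\Phi = [1,2,2,4]^\top$, so $\Phi\theta = [\theta, 2\theta, 2\theta, 4\theta]^\top$, i.e.\ $Q_\theta(s_1,a_1) = \theta$, $Q_\theta(s_1,a_2) = 2\theta$, $Q_\theta(s_2,a_1) = 2\theta$, $Q_\theta(s_2,a_2) = 4\theta$. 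Since the behavior policy takes each action with probability $1/2$ and the induced chain on states is clearly uniform (both actions lead deterministically to a fixed state, so the stationary state distribution is $\mu(s_1) = \mu(s_2) = 1/2$), the matrix $D$ is $\tfrac14 I_4$. Hence $\Phi^\top D\Phi = \tfrac14(1+4+4+16) = \tfrac{25}{4}$ and $(\Phi^\top D\Phi)^{-1}\Phi^\top D = \tfrac{1}{25}\Phi^\top$.

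Next I would evaluate the Bellman optimality operator $\mathcal{H}(Q_\theta)$. Because action $a_i$ deterministically sends the system to state $s_i$, we have $\mathcal{H}(Q_\theta)(s,a_i) = \mathcal{R}(s,a_i) + \gamma \max_{a'} Q_\theta(s_i,a')$ for each $(s,a_i)$. The max over the next state's actions: at $s_1$, $\max\{Q_\theta(s_1,a_1), Q_\theta(s_1,a_2)\} = \max\{\theta, 2\theta\}$, which equals $2\theta$ if $\theta \geq 0$ and $\theta$ if $\theta < 0$; at $s_2$, $\max\{2\theta, 4\theta\}$ equals $4\theta$ if $\theta \geq 0$ and $2\theta$ if $\theta < 0$. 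Writing $m_1(\theta)$ and $m_2(\theta)$ for these two maxima, we get $\mathcal{H}(Q_\theta) = [1 + \gamma m_1, \ 2 + \gamma m_2, \ 2 + \gamma m_1, \ 4 + \gamma m_2]^\top$ for the four state-action pairs $(s_1,a_1),(s_1,a_2),(s_2,a_1),(s_2,a_2)$. Then $\mathcal{H}_\Phi(\theta) = \tfrac{1}{25}\Phi^\top \mathcal{H}(Q_\theta) = \tfrac{1}{25}\big[1(1+\gamma m_1) + 2(2+\gamma m_2) + 2(2+\gamma m_1) + 4(4+\gamma m_2)\big] = \tfrac{1}{25}\big[25 + \gamma(3 m_1 + 6 m_2)\big] = 1 + \tfrac{\gamma}{25}(3 m_1 + 6 m_2)$.

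Finally I would substitute the two cases for $m_1, m_2$ and simplify into the stated closed form. For $\theta \geq 0$: $3 m_1 + 6 m_2 = 3(2\theta) + 6(4\theta) = 30\theta$, so $\mathcal{H}_\Phi(\theta) = 1 + \tfrac{30\gamma}{25}\theta = 1 + \tfrac{6\gamma}{5}\theta = 1 + \tfrac{9\gamma}{10}\theta + \tfrac{3\gamma}{10}\theta$. For $\theta < 0$: $3 m_1 + 6 m_2 = 3\theta + 6(2\theta) = 15\theta$, so $\mathcal{H}_\Phi(\theta) = 1 + \tfrac{15\gamma}{25}\theta = 1 + \tfrac{3\gamma}{5}\theta = 1 + \tfrac{9\gamma}{10}\theta - \tfrac{3\gamma}{10}\theta$. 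Combining both cases with the indicator notation gives exactly $\theta = 1 + \tfrac{9\gamma}{10}\theta + \tfrac{3\gamma\theta}{10}(\mathbb{I}_{\{\theta\geq 0\}} - \mathbb{I}_{\{\theta<0\}})$, as claimed. This is entirely a routine computation; the only place to be careful is the sign-dependent evaluation of the inner maxima (equivalently, of the $\max$ inside $\mathcal{H}$), since $\theta$ and $2\theta$ swap order at $\theta = 0$ — that is where the indicator terms come from, and I would make sure the boundary case $\theta = 0$ is consistently assigned (here to the $\theta \geq 0$ branch, which agrees since both branches give $\mathcal{H}_\Phi(0) = 1$). There is no real obstacle; the lemma is a self-contained calculation whose main purpose is to exhibit that the resulting fixed-point map can have the wrong slope (when $\tfrac{6\gamma}{5} > 1$, i.e.\ $\gamma > 5/6$) and hence is not a contraction, motivating the need for truncation.
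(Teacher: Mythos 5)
Your computation is correct and follows essentially the same route as the paper: determine $\mu=(1/2,1/2)$ and $D=\tfrac14 I_4$, evaluate $\mathcal{H}(\Phi\theta)$ componentwise with the sign-dependent maxima, and apply $(\Phi^\top D\Phi)^{-1}\Phi^\top D=\tfrac{1}{25}\Phi^\top$ to get $1+\tfrac{6\gamma}{5}\theta$ for $\theta\geq 0$ and $1+\tfrac{3\gamma}{5}\theta$ for $\theta<0$. (The paper's intermediate display writes $1+\tfrac{3}{4}\gamma\theta$ for the negative branch, which is a typo; your $\tfrac{3\gamma}{5}$ is what is consistent with the lemma's final indicator form.)
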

When the discount factor $\gamma$ is in the interval $(5/6,1)$, for any positive initialization $\theta_0>0$, it is clear that performing fixed-point iteration to solve Eq. (\ref{eq:31}) in this example leads to divergence. Since $Q$-learning with target network is a stochastic variant of such fixed-point iteration, it also diverges. Numerical simulations demonstrate that performing either classical semi-gradient $Q$-learning (cf. Figure \ref{fig:chen_semi}) or $Q$-learning with target network (cf. Figure \ref{fig:chen_TN}) leads to divergence for the MDP in Example \ref{example1}.

\begin{figure}[H]
\centering
\begin{minipage}{.4\textwidth}
  \includegraphics[width=.95\linewidth]{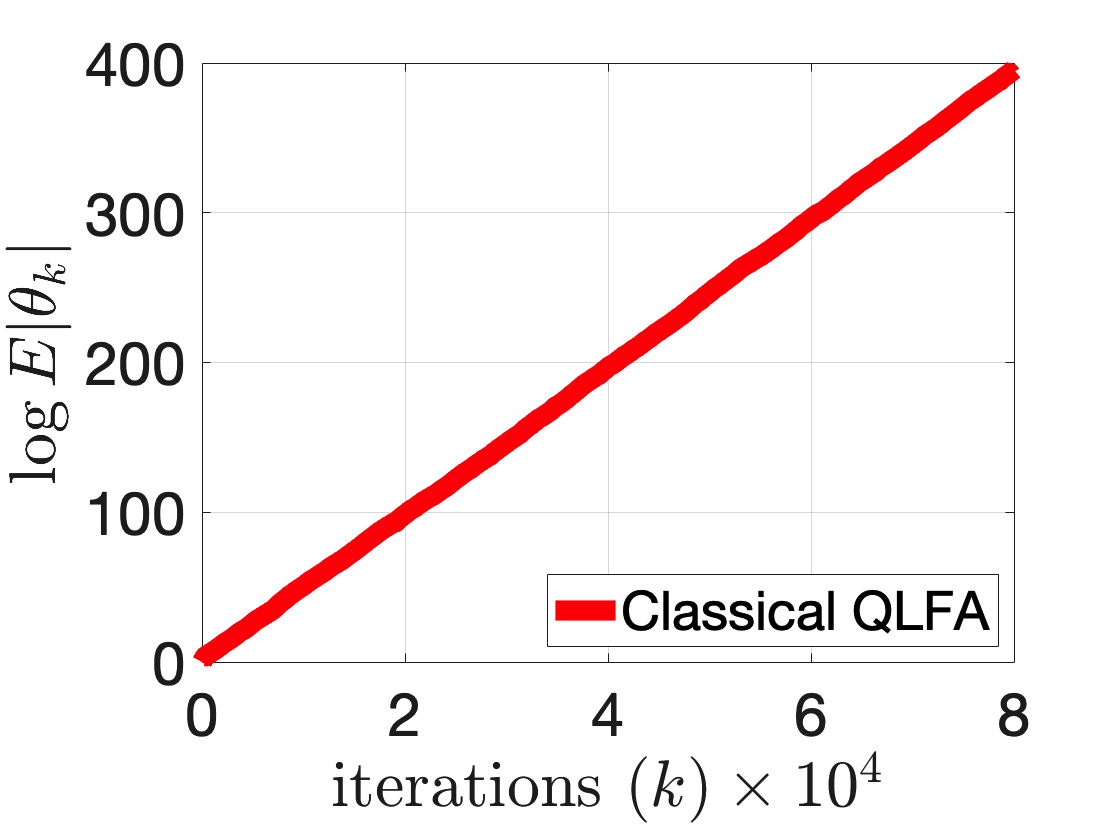}
  \caption{Classical Semi-Gradient $Q$-Learning}
  \label{fig:chen_semi}
\end{minipage}
\begin{minipage}{.4\textwidth}
  \includegraphics[width=.95\linewidth]{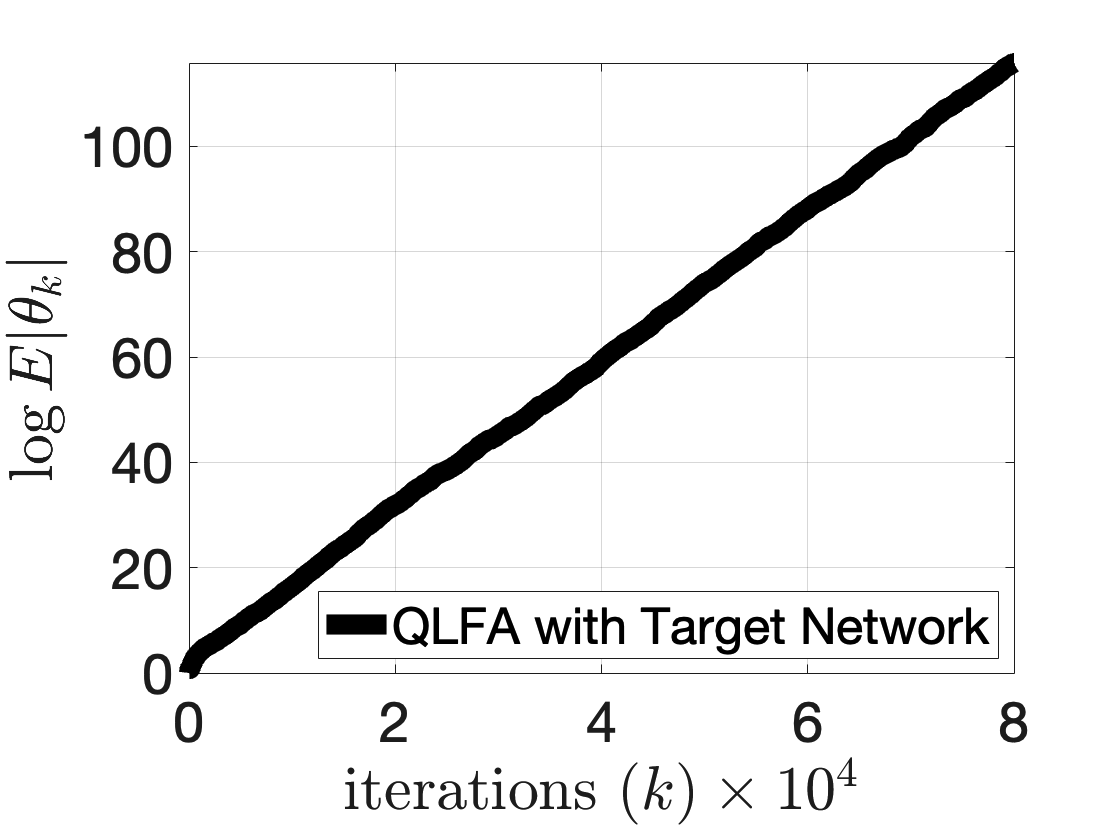}
  \caption{$Q$-Learning with Target Network}
  \label{fig:chen_TN}
\end{minipage}
\end{figure}
\end{example}

\subsection{Truncation to the Rescue}\label{subsec:TC}
The key ingredient we used to further overcome the divergence of $Q$-learning with target network is truncation. Recall from the previous  section that $Q$-learning with target network is trying to perform a stochastic variant of the fixed-point iteration (\ref{eq:fi2}), which can be equivalently written as
\begin{align}\label{eq:fi3}
    \Tilde{Q}_{t+1}=\text{Proj}_{\mathcal{W}}\mathcal{H}(\Tilde{Q}_t),
\end{align}
where we use $\Tilde{Q}_t$ to denote the $Q$-function estimate associated with the target network $\hat{\theta}_t$, i.e., $\Tilde{Q}_t=\Phi \hat{\theta}_t$. To motivate the truncation technique, we next analyze the update (\ref{eq:fi3}), whose behavior in terms of stability aligns with the behavior of $Q$-learning with target network, as explained in the previous section. First note that Eq. (\ref{eq:fi3}) is equivalent to
\begin{align*}
    \Tilde{Q}_{t+1}-Q^*=\mathcal{H}(\Tilde{Q}_t)-\mathcal{H}(Q^*)+\text{Proj}_{\mathcal{W}}\mathcal{H}(\Tilde{Q}_t)-\mathcal{H}(\Tilde{Q}_t).
\end{align*}
A simple calculation using triangle inequality, the contraction property of $\mathcal{H}(\cdot)$, and telescoping yields the following error bound of the iterative algorithm (\ref{eq:fi3}):
\begin{align*}
    \|\Tilde{Q}_{t+1}-Q^*\|_\infty
    \leq \gamma^{t+1}\|\Tilde{Q}_0-Q^*\|_\infty+\sum_{i=0}^t\gamma^{t-i}\underbrace{\|\text{Proj}_{\mathcal{W}}\mathcal{H}(\Tilde{Q}_i)-\mathcal{H}(\Tilde{Q}_i)\|_\infty}_{A_i}.
\end{align*}
The problem with the previous analysis is that the term $A_i$ (which captures the error due to using linear function approximation) is not necessarily bounded unless using a complete basis or knowing in prior that $\{\Tilde{Q}_t\}$ is always contained in a bounded set. The possibility that such function approximation error can be unbounded is an alternative explanation to the divergence of $Q$-learning with linear function approximation. This is true for arbitrary function approximation (including neural network) as well since it is in general not possible to uniformly approximate unbounded functions.

Suppose we are able to somehow control the size of the estimate $\Tilde{Q}_t$ so that it is always contained in a bounded set. Then the term $A_i$ is guaranteed to be finite, and well captures the approximation power of the chosen function class. To achieve the boundedness of the associated $Q$-function estimate $\Tilde{Q}_t$ of the target network, tracing back to Algorithm \ref{alg:target_network_QLFA}, a natural approach is to first project $\Phi\hat{\theta}_t$ onto the $\ell_\infty$-norm ball $B_r:=\{Q\in\mathbb{R}^{|\mathcal{S}||\mathcal{A}|}\mid \|Q\|_\infty\leq r\}$ before using it as the target $Q$-function in the inner-loop, resulting in Algorithm \ref{alg:middle_guy} presented in the following.

\begin{algorithm}[h]\caption{Impractical $Q$-Learning with Linear Function Approximation: Target Network and Projection}\label{alg:middle_guy}
\begin{algorithmic}[1] 
	\STATE {\bfseries Input:} Integers $T$, $K$, initializations $\theta_{t,0}=\bm{0}$ for all $t=0,1,...,T-1$ and $\hat{\theta}_0=\bm{0}$, behavior policy $\pi_b$
	\FOR{$t=0,1,\cdots,T-1$}
	\FOR{$k=0,1,\cdots,K-1$}
	\STATE Sample $A_k\sim \pi_b(\cdot|S_k)$, $S_{k+1}\sim P_{A_k}(S_k,\cdot)$
	\STATE $\theta_{t,k+1}=\theta_{t,k}+\alpha_k\phi(S_k,A_k)(\mathcal{R}(S_k,A_k)+\gamma\max_{a'\in\mathcal{A}}{\Tilde{Q}_t(S_{k+1},a')}-\phi(S_k,A_k)^\top \theta_{t,k})$
	\ENDFOR
	\STATE $\hat{\theta}_{t+1}=\theta_{t,K}$
	\STATE $\Tilde{Q}_{t+1}=\Pi_{B_r}\Phi \hat{\theta}_{t+1}$
	\STATE $S_0=S_K$
	\ENDFOR
	\STATE\textbf{Output:} $\hat{\theta}_T$
\end{algorithmic}
\end{algorithm}

In line 8 of Algorithm \ref{alg:middle_guy}, the operator $\Pi_{B_r}$ stands for the projection onto the $\ell_\infty$-norm ball $B_r$ with respect to some suitable norm $\|\cdot\|$. The specific norm $\|\cdot\|$ chosen to perform the projection turns out to be irrelevant as result of a key observation between truncation and projection.

Algorithm \ref{alg:middle_guy} although stabilizes the $Q$-function estimate $\Tilde{Q}_t$, it is not implementable in practice. To see this, recall that the whole point of using linear function approximation is to avoid working with $|\mathcal{S}||\mathcal{A}|$ dimensional objects. However, to implement Algorithm \ref{alg:middle_guy} line 8, one has to first compute $\Phi \hat{\theta}_{t+1}\in\mathbb{R}^{|\mathcal{S}||\mathcal{A}|}$, and then project it onto $B_r$. Therefore, the last difficulty we need to overcome is to find a way to implement Algorithm \ref{alg:middle_guy} without working with $|\mathcal{S}||\mathcal{A}|$ dimensional objects. The solution relies on the following observation.  

\begin{lemma}\label{le:truncation}
For any $x\in\mathbb{R}^{|\mathcal{S}||\mathcal{A}|} $ and any weighted $\ell_p$-norm $\|\cdot\|$ (the weights can be arbitrary and $p\in [1,\infty]$), we have $\ceil{x}\in \arg\min_{y\in B_r}\|x-y\|$.
\end{lemma}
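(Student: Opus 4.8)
The plan is to reduce the claim to a coordinate-wise statement. Fix $x\in\mathbb{R}^{|\mathcal{S}||\mathcal{A}|}$ and a weighted $\ell_p$-norm $\|y\| = \left(\sum_i w_i |y_i|^p\right)^{1/p}$ with weights $w_i>0$ (with the obvious modification for $p=\infty$). Since the constraint set $B_r = \{y : \|y\|_\infty \le r\}$ is exactly the Cartesian product $\prod_i [-r,r]$, it decouples across coordinates, and the objective $\|x-y\|$ is a monotone function (the $p$-th root, or the max) of the nonnegative per-coordinate terms $w_i|x_i - y_i|^p$. Hence minimizing $\|x-y\|$ over $y\in B_r$ can be done one coordinate at a time: it suffices to show that for each $i$, the scalar $\ceil{x}_i$ minimizes $|x_i - t|$ over $t\in[-r,r]$.

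The key step is this scalar fact: for fixed $a\in\mathbb{R}$, the function $t\mapsto |a-t|$ on the interval $[-r,r]$ is minimized at the point of $[-r,r]$ closest to $a$, namely at $a$ itself if $a\in[-r,r]$, at $r$ if $a>r$, and at $-r$ if $a<-r$ — which is precisely the definition of the truncation $\ceil{\cdot}$ given before Algorithm \ref{alg:main}. This is immediate from the fact that $|a-t|$ is convex in $t$ with unconstrained minimizer $a$, so on any interval it is minimized at the projection of $a$ onto that interval; alternatively one checks monotonicity directly ($|a-t|$ is nonincreasing for $t\le a$ and nondecreasing for $t\ge a$). Combining, for every $y\in B_r$ we get $w_i|x_i - y_i|^p \ge w_i|x_i - \ceil{x}_i|^p$ for each $i$ (and each term on the right is finite since $\ceil{x}_i\in[-r,r]$), so summing and taking $p$-th roots — or taking the max, in the $p=\infty$ case — yields $\|x-y\| \ge \|x - \ceil{x}\|$. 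Since $\ceil{x}\in B_r$ by construction, it follows that $\ceil{x}\in\arg\min_{y\in B_r}\|x-y\|$.

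There is essentially no hard part here; the only thing to be careful about is handling $p=\infty$ and the weights uniformly with the finite-$p$ case, which is why I phrase the argument in terms of monotonicity of the aggregation step (sum-and-root, or max) applied to the coordinatewise minimization, rather than writing out a single formula. The monotonicity of the aggregation is what lets the coordinate-wise minimizers assemble into a global minimizer, and it holds verbatim for all $p\in[1,\infty]$ and arbitrary positive weights.
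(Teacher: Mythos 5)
Your proposal is correct and follows essentially the same route as the paper's proof: decouple the minimization over the box $B_r$ coordinate-wise, observe that truncation is the scalar minimizer of $|x_i-t|$ on $[-r,r]$, and reassemble via monotonicity of the sum-and-root (or max) aggregation. The only slight difference is that you explicitly treat the $p=\infty$ case, which the paper's displayed computation writes out only for finite $p$; this is a harmless refinement, not a different argument.
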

\begin{remark}
Note that $\arg\min_{y\in B_r}\|x-y\|$ is in general a set because the projection may not be unique. As an example, observe that any point in the set $\{(x,1)\mid x\in [-1,1]\}$ is a projection of the point $(0,2)$ onto the $\ell_\infty$-norm unit  ball $\{(x,y)\mid x,y\in [-1,1]\}$ with respect to the $\ell_\infty$-norm.
\end{remark}
Lemma \ref{le:truncation} states that for any $x\in\mathbb{R}^{|\mathcal{S}||\mathcal{A}|}$, if we simply truncate $x$ at $r$, the resulting vector must belong to the projection set of $x$ onto the $\ell_\infty$-norm ball with radius $r$, for a wide class of projection norms. This seemingly simple but important result enables us to replace projection $\Pi_{B_r}(\cdot)$ by truncation $\ceil{\cdot}$ in line 8 of Algorithm \ref{alg:middle_guy}: 
\begin{align*}
    {\Tilde{Q}_{t+1}=\Pi_{B_r}\Phi \hat{\theta}_{t+1}}\quad \longrightarrow\quad\Tilde{Q}_{t+1}=\ceil{\Phi \hat{\theta}_{t+1}}.
\end{align*}
Unlike projection, truncation is a component-wise operation. Hence $\Tilde{Q}_{t+1}=\ceil{\Phi \hat{\theta}_{t+1}}$ is equivalent to $\Tilde{Q}_{t+1}(s,a)=\ceil{\phi(s,a)^\top \hat{\theta}_{t+1}}$ for all $(s,a)$.

The last issue is that we need to perform truncation for all state-action pairs $(s,a)$, which as illustrated earlier, violates the purpose of doing function approximation. However, observe that the target network is used in line 5 of Algorithm \ref{alg:middle_guy}, where only the components of $\Tilde{Q}_t$ visited by the sample trajectory is needed to perform the update. In light of this observation, instead of truncating $\phi(s,a)^\top \hat{\theta}_t$ for all $(s,a)$, we only need to truncate $\phi(S_{k+1},a')^\top \hat{\theta}_t$ in Algorithm \ref{alg:middle_guy} line 5, which leads to our stable version of $Q$-learning with linear function approximation in Algorithm \ref{alg:main}. The following proposition shows that target network and truncation together stabilized $Q$-learning with linear function approximation, and serves as a middle step to prove Theorem \ref{thm:main}.

\begin{proposition}\label{prop:outer_loop}
The following inequality holds:
\begin{align}
    \mathbb{E}[&\|\hat{Q}_T-Q^*\|_\infty]\leq \gamma^T\|\hat{Q}_0-Q^*\|_\infty+\frac{\mathcal{E}_{\text{approx}}}{1-\gamma}+\sum_{i=0}^{T-1}\gamma^{T-i-1}\mathbb{E}[\|\hat{Q}_{i+1}-\ceil{\text{Proj}_{\mathcal{W}}\mathcal{H}(\hat{Q}_i)}\|_\infty].\label{eq:81}
\end{align}
\end{proposition}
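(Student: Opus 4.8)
The plan is to derive \eqref{eq:81} from a recursive relation between consecutive truncated target-network estimates $\hat{Q}_t = \ceil{\Phi\hat\theta_t}$. The starting point is the decomposition
\begin{align*}
    \hat{Q}_{t+1}-Q^* = \left(\ceil{\operatorname{Proj}_{\mathcal{W}}\mathcal{H}(\hat{Q}_t)} - Q^*\right) + \left(\hat{Q}_{t+1} - \ceil{\operatorname{Proj}_{\mathcal{W}}\mathcal{H}(\hat{Q}_t)}\right),
\end{align*}
where the first bracket is the ``idealized'' fixed-point update (what Algorithm \ref{alg:middle_guy}/the iteration \eqref{eq:fi3} composed with truncation would produce) and the second bracket is the inner-loop estimation error coming from running only $K$ steps of stochastic approximation on Markovian samples. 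Taking $\|\cdot\|_\infty$, applying the triangle inequality, and then taking expectations gives
\begin{align*}
    \mathbb{E}[\|\hat{Q}_{t+1}-Q^*\|_\infty] \le \mathbb{E}\left[\|\ceil{\operatorname{Proj}_{\mathcal{W}}\mathcal{H}(\hat{Q}_t)} - Q^*\|_\infty\right] + \mathbb{E}\left[\|\hat{Q}_{t+1} - \ceil{\operatorname{Proj}_{\mathcal{W}}\mathcal{H}(\hat{Q}_t)}\|_\infty\right].
\end{align*}

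Next I would show the one-step contraction $\|\ceil{\operatorname{Proj}_{\mathcal{W}}\mathcal{H}(\hat{Q}_t)} - Q^*\|_\infty \le \gamma\|\hat{Q}_t - Q^*\|_\infty + \mathcal{E}_{\text{approx}}$. The argument: insert and remove $\mathcal{H}(\hat{Q}_t)$, writing $\ceil{\operatorname{Proj}_{\mathcal{W}}\mathcal{H}(\hat{Q}_t)} - Q^* = (\ceil{\operatorname{Proj}_{\mathcal{W}}\mathcal{H}(\hat{Q}_t)} - \mathcal{H}(\hat{Q}_t)) + (\mathcal{H}(\hat{Q}_t) - \mathcal{H}(Q^*))$, using $Q^* = \mathcal{H}(Q^*)$. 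The first term is bounded in $\|\cdot\|_\infty$ by $\mathcal{E}_{\text{approx}}$ — this uses the definition of $\mathcal{E}_{\text{approx}}$ as a supremum over $Q\in\mathcal{W}$ with $\|Q\|_\infty\le r$, which applies here because $\hat{Q}_t=\ceil{\Phi\hat\theta_t}$ is bounded by $r$ in $\infty$-norm; I need to be slightly careful that $\mathcal{E}_{\text{approx}}$ is defined with $Q\in\mathcal{W}$ rather than $Q\in B_r$, so I should note that $\hat Q_t$ is a truncation of an element of $\mathcal{W}$, and either the definition is meant to range over truncated elements of $\mathcal{W}$ or an extra harmless step identifies $\hat Q_t$ with the relevant argument. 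The second term is at most $\gamma\|\hat{Q}_t - Q^*\|_\infty$ by the $\ell_\infty$-contraction of $\mathcal{H}(\cdot)$. Combining gives the claimed per-step bound; iterating it with telescoping and summing the geometric series $\sum_{i=0}^{T-1}\gamma^{T-1-i}\mathcal{E}_{\text{approx}} \le \mathcal{E}_{\text{approx}}/(1-\gamma)$, together with carrying the inner-loop error terms $\gamma^{T-1-i}\mathbb{E}[\|\hat{Q}_{i+1} - \ceil{\operatorname{Proj}_{\mathcal{W}}\mathcal{H}(\hat{Q}_i)}\|_\infty]$ through the unrolling, yields exactly \eqref{eq:81}.

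Concretely, the induction is: from $\mathbb{E}[\|\hat{Q}_{t+1}-Q^*\|_\infty] \le \gamma\,\mathbb{E}[\|\hat{Q}_t-Q^*\|_\infty] + \mathcal{E}_{\text{approx}} + \mathbb{E}[\|\hat{Q}_{t+1} - \ceil{\operatorname{Proj}_{\mathcal{W}}\mathcal{H}(\hat{Q}_t)}\|_\infty]$, unrolling from $T$ down to $0$ gives
\begin{align*}
    \mathbb{E}[\|\hat{Q}_T-Q^*\|_\infty] \le \gamma^T\|\hat{Q}_0-Q^*\|_\infty + \sum_{i=0}^{T-1}\gamma^{T-1-i}\left(\mathcal{E}_{\text{approx}} + \mathbb{E}[\|\hat{Q}_{i+1} - \ceil{\operatorname{Proj}_{\mathcal{W}}\mathcal{H}(\hat{Q}_i)}\|_\infty]\right),
\end{align*}
and bounding $\sum_{i=0}^{T-1}\gamma^{T-1-i} \le 1/(1-\gamma)$ on the $\mathcal{E}_{\text{approx}}$ part finishes it.

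I expect the main obstacle to be purely conceptual bookkeeping rather than hard analysis: carefully justifying that $\hat{Q}_{i+1}$ (the \emph{truncated} output of the inner loop, $\ceil{\Phi\theta_{i,K}}$) is the right object to compare against $\ceil{\operatorname{Proj}_{\mathcal{W}}\mathcal{H}(\hat{Q}_i)}$, and that truncation does not break the contraction step. The latter is handled by the order of operations — truncation is applied \emph{after} the projected Bellman step on both sides, and since $Q^* = \ceil{Q^*}$ (because $\|Q^*\|_\infty\le r$), comparing $\ceil{\operatorname{Proj}_{\mathcal{W}}\mathcal{H}(\hat{Q}_t)}$ with $Q^* = \ceil{\mathcal{H}(Q^*)}$ and using that $\ceil{\cdot}$ is non-expansive in $\|\cdot\|_\infty$ keeps everything clean; Lemma \ref{le:truncation} is what guarantees $\ceil{\cdot}$ coincides with the projection onto $B_r$ so that $\hat Q_{i+1}$ is exactly the quantity produced by Algorithm \ref{alg:middle_guy}. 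Note this proposition deliberately does \emph{not} touch the size of the inner-loop error term $\mathbb{E}[\|\hat{Q}_{i+1} - \ceil{\operatorname{Proj}_{\mathcal{W}}\mathcal{H}(\hat{Q}_i)}\|_\infty]$ — that is the linear stochastic approximation analysis quantifying $E_2$ and $E_3$, deferred to a separate lemma — so the proof here is short once the decomposition and the contraction step are set up correctly.
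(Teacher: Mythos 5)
Your proposal is correct and follows essentially the same route as the paper: the same three-term decomposition of $\hat{Q}_{t+1}-Q^*$ into the contraction term $\mathcal{H}(\hat{Q}_t)-\mathcal{H}(Q^*)$, the inner-loop error $\hat{Q}_{t+1}-\ceil{\text{Proj}_{\mathcal{W}}\mathcal{H}(\hat{Q}_t)}$, and the approximation term $\ceil{\text{Proj}_{\mathcal{W}}\mathcal{H}(\hat{Q}_t)}-\mathcal{H}(\hat{Q}_t)$, followed by the triangle inequality, the $\gamma$-contraction of $\mathcal{H}$, the definition of $\mathcal{E}_{\text{approx}}$, and geometric unrolling. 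The domain subtlety you flag (that $\hat{Q}_t$ is a truncated element of $\mathcal{W}$ rather than an element of $\mathcal{W}$ itself) is handled no more carefully in the paper, so your treatment matches its proof.
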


Because of truncation, the error due to using function approximation is bounded, and is captured by $\mathcal{E}_{\text{approx}}$. This is crucial to prevent the divergence of $Q$-learning with linear function approximation. The last term in Eq. (\ref{eq:81}) captures the error in the inner-loop of Algorithm \ref{alg:main}, and eventually contribute to the terms $E_2$ and $E_3$ in Eq. (\ref{eq:finite-sample-bounds}).

Revisiting Example \ref{example1}, where either semi-gradient $Q$-learning or $Q$-learning with target network diverges, Algorithm \ref{alg:main} converges as demonstrated in Figure \ref{fig:chen_TNTC}. Moreover, observe that Algorithm \ref{alg:main} seems to converge to a positive scalar, which we denote by $\theta^*$. As a result, the policy $\pi$ induced greedily from $\Phi \theta^*$ is to always take action $a_2$. It can be easily verified that $\pi$ is indeed the optimal policy. This is an interesting observation since the optimal $Q$-function $Q^*$ in this case does not belong to the linear sub-space $\mathcal{W}$ (which is spanned by a single basis vector $(1,2,2,4)^\top$). Nevertheless performing Algorithm \ref{alg:main} converges and the induced policy is optimal. Figure \ref{fig:baird_TNTC} shows that Algorithm \ref{alg:main} also converges for the Baird's MDP example.

\begin{figure}[H]
\centering
\begin{minipage}{.4\textwidth}
  \includegraphics[width=.95\linewidth]{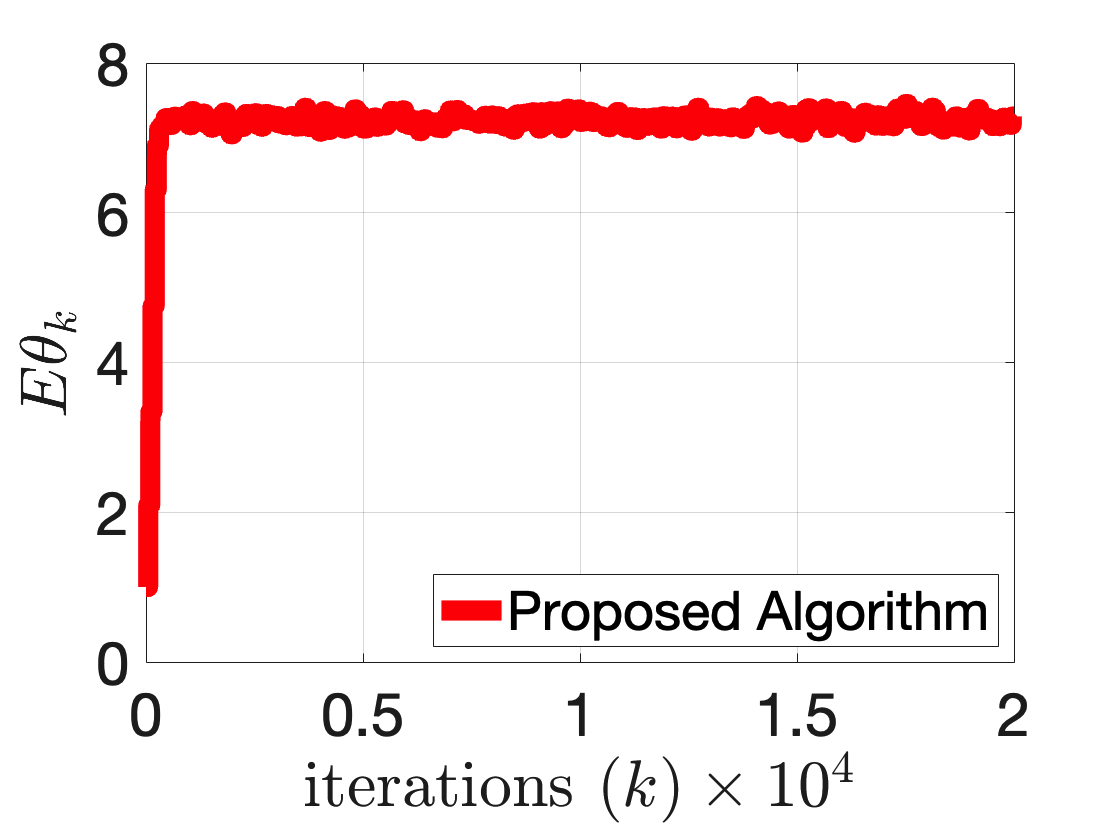}
  \caption{Algorithm \ref{alg:main} for Baird's MDP Example}
  \label{fig:chen_TNTC}
\end{minipage}
\begin{minipage}{.4\textwidth}
  \includegraphics[width=.95\linewidth]{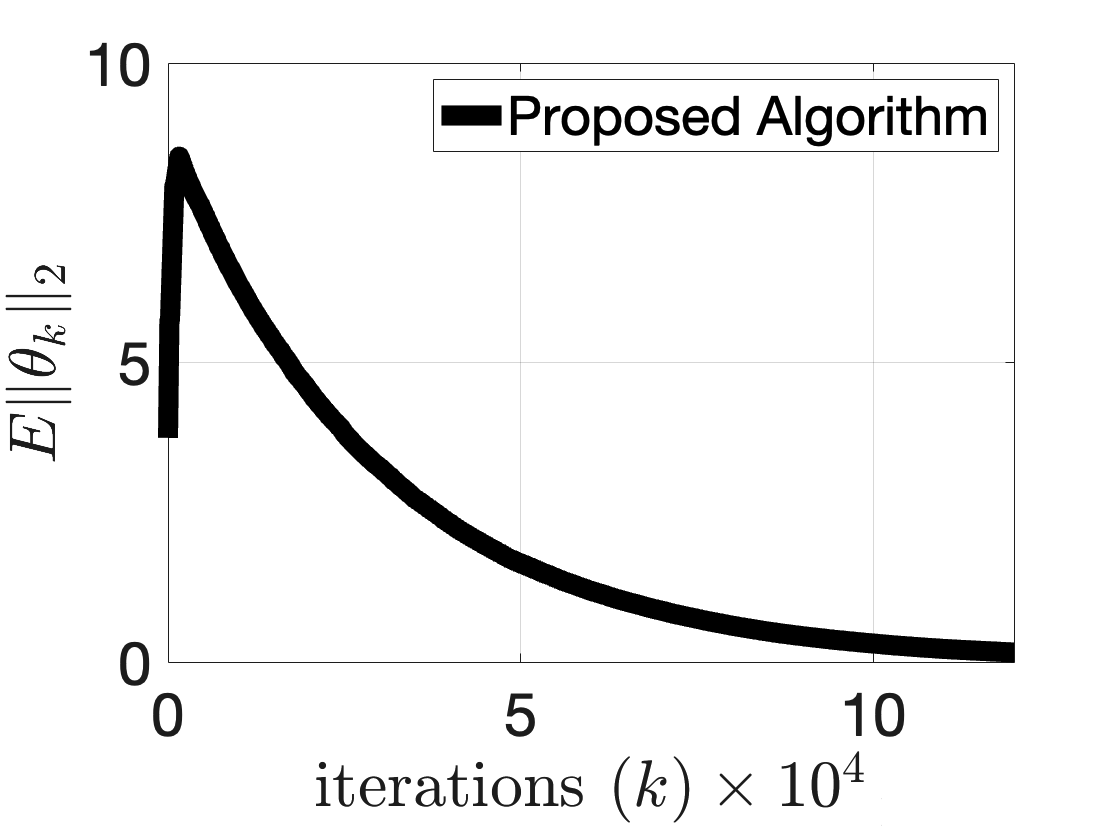}
  \caption{Algorithm \ref{alg:main} for Example \ref{example1}}
  \label{fig:baird_TNTC}
\end{minipage}
\end{figure}

\section{Conclusion and Future Work}\label{sec:conclusion}
This work makes contributions towards the understanding of $Q$-learning with function approximation. In particular, we show that by adding target network and truncation, the resulting $Q$-learning with linear function approximation is provably stable, and achieves the optimal $\Tilde{\mathcal{O}}(\epsilon^{-2})$ sample complexity up to a function approximation error. Furthermore, the establishment of our results do not require strong assumptions (e.g. linear MDP, strong negative drift assumption, sufficiently small discount factor $\gamma$, etc.) as in related literature. There are two immediate future directions in this line of work. One is to improve the function approximation error, and the second is to extend the results of this work to using neural network approximation, i.e., the Deep $Q$-Network. The detailed plan is provided in Appendix \ref{ap:future}.

\section*{Acknowledgement} We thank Prof. Csaba Szepesvari from University of Alberta for the insightful comments and suggestions about this work.

\bibliographystyle{imsart-nameyear}
\bibliography{references}

\begin{appendix}
\section{Divergent MDP Example in \cite{baird1995residual}}\label{ap:Baird}
The MDP instance constructed in \cite{baird1995residual} is presented in Figure \ref{fig:counterexample}. As we see, the state-space is $\mathcal{S}=\{1,2,...,7\}$ and action-space is $\mathcal{A}=\{\text{solid}, \text{dash}\}$. Regardless of the present state, the dash action takes the agent to one of the states $1,2,...,6$, each with equal probability, while the solid action takes the agent to state $7$ with probability $1$. The reward is identically equal to zero for all transitions, and the behavior policy $\pi_b$ is to take each action (solid or dash) with equal probability. 

\begin{figure}[h!]
	\centering
	\includegraphics[width=0.6\textwidth]{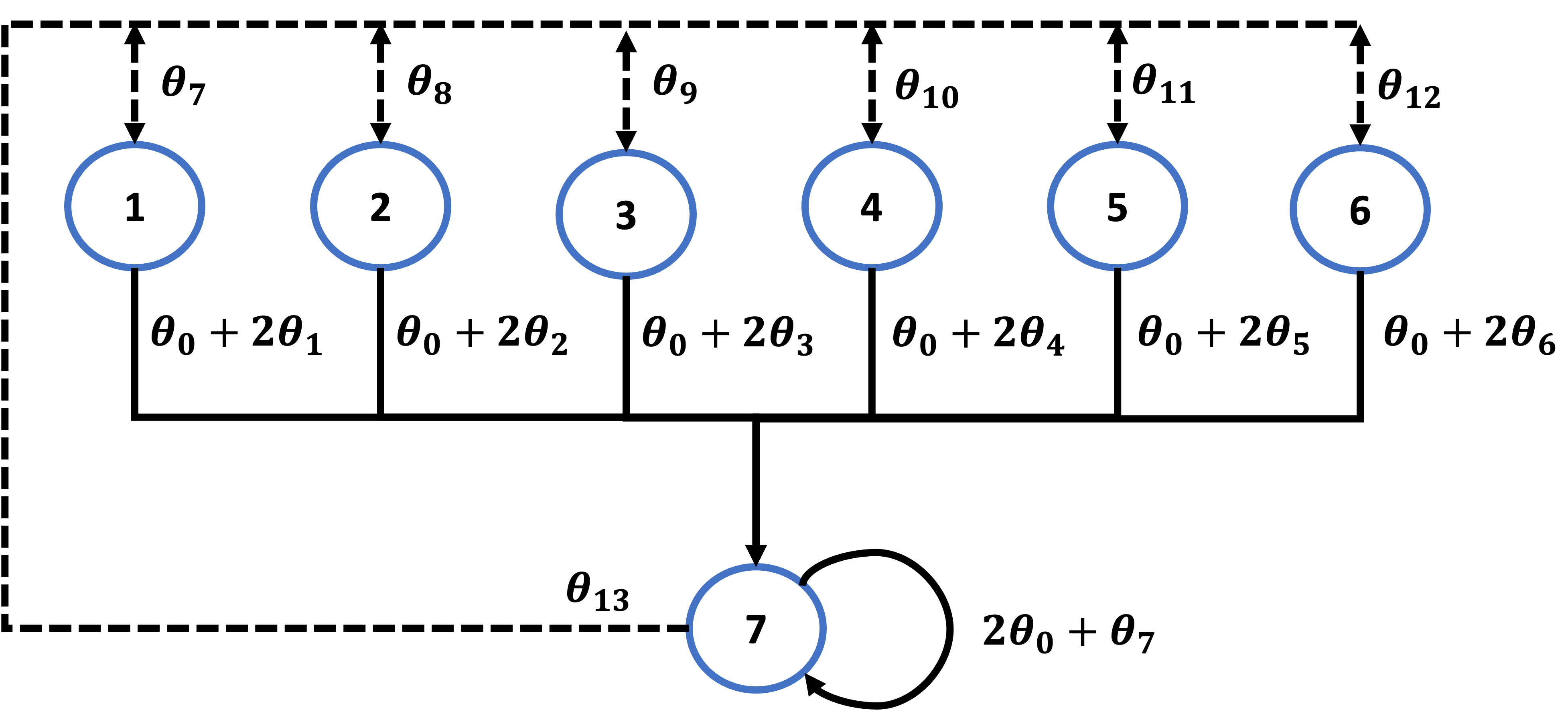}
	\caption{Baird’s counter-example \citep{baird1995residual}}\label{fig:counterexample}
\end{figure}

The $14$ basis vectors used for linear approximation are also presented in Figure \ref{fig:counterexample}. For example, the $Q$-function at state $1$ taking solid action is approximated by $\theta_0+2\theta_1$. One can easily check that the basis vectors are linearly independent. Hence this is essentially a change of basis. Performing classical semi-gradient $Q$-learning with linear function approximation leads to divergence in this example, as demonstrated in \cite{baird1995residual}, and also in Figure \ref{fig:Baird_divergence} of this work.

\section{Proof of Theorem \ref{thm:main}}

\subsection{Analysis of the Outer-Loop (Proof of Proposition \ref{prop:outer_loop})}\label{ap:outer_loop}
Recall that we denote $\hat{Q}_t=\ceil{\Phi\hat{\theta}_t}$. Using the fact that $Q^*=\mathcal{H}(Q^*)$, we have for any $t\geq 0$ that
\begin{align*}
    \hat{Q}_t-Q^*&=\hat{Q}_t-\mathcal{H}(Q^*)\\
    &=\mathcal{H}(\hat{Q}_{t-1})-\mathcal{H}(Q^*)+\hat{Q}_t-\ceil{\text{Proj}_{\mathcal{W}}\mathcal{H}(\hat{Q}_{t-1})}+\ceil{\text{Proj}_{\mathcal{W}}\mathcal{H}(\hat{Q}_{t-1})}-\mathcal{H}(\hat{Q}_{t-1}).
\end{align*}
It follows that
\begin{align*}
    \|\hat{Q}_t-Q^*\|_\infty
    \leq \;&\|\mathcal{H}(\hat{Q}_{t-1})-\mathcal{H}(Q^*)\|_\infty+\|\hat{Q}_t-\ceil{\text{Proj}_{\mathcal{W}}\mathcal{H}(\hat{Q}_{t-1})}\|_\infty\\
    &+\|\ceil{\text{Proj}_{\mathcal{W}}\mathcal{H}(\hat{Q}_{t-1})}-\mathcal{H}(\hat{Q}_{t-1})\|_\infty\\
    \leq \;&\gamma\|\hat{Q}_{t-1}-Q^*\|_\infty+\|\hat{Q}_t-\ceil{\text{Proj}_{\mathcal{W}}\mathcal{H}(\hat{Q}_{t-1})}\|_\infty+\mathcal{E}_{\text{approx}},
\end{align*}
where the last line follows from $\mathcal{H}(\cdot)$ being a $\gamma$-contraction mapping with respect to $\|\cdot\|_\infty$, and the definition of $\mathcal{E}_{\text{approx}}$.

Repeatedly using the previous inequality and then taking expectation on both sides of the resulting inequality, and we have for any $T\geq 0$:
\begin{align}\label{eq:20}
    \mathbb{E}[\|\hat{Q}_T-Q^*\|_\infty]\leq \gamma^T\|\hat{Q}_0-Q^*\|_\infty+\sum_{i=0}^{T-1}\gamma^{T-i-1}\mathbb{E}[\|\hat{Q}_{i+1}-\ceil{\text{Proj}_{\mathcal{W}}\mathcal{H}(\hat{Q}_i)}\|_\infty]+\frac{\mathcal{E}_{\text{approx}}}{1-\gamma}.
\end{align}
This proves Proposition \ref{prop:outer_loop}.
The remaining task is to control $\mathbb{E}[\|\hat{Q}_{i+1}-\ceil{\text{Proj}_{\mathcal{W}}\mathcal{H}(\hat{Q}_i)}\|_\infty]$ for any $i=0,...,T-1$. First of all, since $\hat{Q}_t=\ceil{\Phi \hat{\theta}_t}=\ceil{\Phi \theta_{t-1,K}}$ and $\|\ceil{Q_1}-\ceil{Q_2}\|_\infty\leq \|Q_1-Q_2\|_\infty$ for any $Q_1,Q_2\in\mathbb{R}^{|\mathcal{S}||\mathcal{A}|}$, we have
\begin{align*}
    \mathbb{E}[\|\hat{Q}_{i+1}-\ceil{\text{Proj}_{\mathcal{W}}\mathcal{H}(\hat{Q}_i)}\|_\infty]\leq \mathbb{E}[\|\Phi \theta_{i,K}-\text{Proj}_{\mathcal{W}}\mathcal{H}(\hat{Q}_i)\|_\infty].
\end{align*}
To further bound the RHS of the previous inequality, we need to analyze the inner-loop of Algorithm \ref{alg:main}, which is done in the next section.

\subsection{Analysis of the Inner-Loop}
We begin by presenting the inner-loop of Algorithm \ref{alg:main}.
\begin{algorithm}[H]\caption{Inner-Loop of Algorithm \ref{alg:main}}\label{alg:inner-loop}
\begin{algorithmic}[1] 
	\STATE {\bfseries Input:} Integer $K$, initialization $\theta_0=\bm{0}$, target network $\hat{\theta}$, behavior policy $\pi_b$
	\FOR{$k=0,1,\cdots,K-1$}
	\STATE Sample $A_k\sim \pi_b(\cdot|S_k)$, $S_{k+1}\sim P_{A_k}(S_k,\cdot)$
	\STATE $\theta_{k+1}=\theta_k+\alpha_k\phi(S_k,A_k)(\mathcal{R}(S_k,A_k)+\gamma\max_{a'\in\mathcal{A}}\ceil{\phi(S_{k+1},a')^\top\hat{\theta}}-\phi(S_k,A_k)^\top \theta_k)$
	\ENDFOR
	\STATE\textbf{Output:} $\theta_K$
\end{algorithmic}
\end{algorithm}

In view of the main update equation, Algorithm \ref{alg:inner-loop} is a Markovian linear stochastic approximation algorithm for solving the following linear system of equations:
\begin{align*}
    -\Phi^\top D\Phi\theta+\Phi^\top D\mathcal{H}(\Phi\hat{\theta})=0.
\end{align*}
Since the matrix $-\Phi^\top D\Phi$ is negative definite, the finite-sample guarantees follow from standard results in the literature \citep{srikant2019finite,chen2019finitesample}. Specifically, we will apply \cite{chen2019finitesample} Corollary 2.1 to establish the result. To make this paper self-contained, we first present Corollary 2.1 of \cite{chen2019finitesample} in the following.

\begin{theorem}[Corollary 2.1 of \cite{chen2019finitesample}]\label{thm:chen2019}
Consider the Markovian stochastic approximation algorithm with an arbitrary initialization $x_0\in\mathbb{R}^d$:
\begin{align}\label{sa:chen2019}
    x_{k+1}=x_k+\alpha F(x_k,Y_k).
\end{align}
Suppose that
\begin{enumerate}[(1)]
    \item The finite-state  Markov chain $\{Y_k\}$ has a unique stationary distribution $\nu$, and it holds for any $k\geq  0$ that $\max_{y\in\mathcal{Y}}\|P^k(y,\cdot)-\nu(\cdot)\|_{\text{TV}}\leq C\rho^k$ for some constant $C>0$ and $\rho\in (0,1)$.
    \item There exist constants $L_1,L_2>0$ such that the operator $F(\cdot,\cdot)$ satisfies $\|F(x_1,y)-F(x_2,y)\|_2\leq L_1\|x_1-x_2\|_2$ and $\|F(\bm{0},y)\|_2\leq L_2$ for any $x_1,x_2\in\mathbb{R}^d$ and $y\in\mathcal{Y}$.
    \item The equation $\bar{F}(x)=\mathbb{E}_{Y\sim \nu}[F(x,Y)]=0$ has a unique solution $x^*\in\mathbb{R}^d$, and the following inequality holds for all $x\in\mathbb{R}^d$: $(x-x^*)^\top \bar{F}(x)\leq -\kappa \|x-x^*\|_2^2$, where $\kappa>0$ is a positive constant.
    \item The stepsize sequence $\{\alpha_k\}$ is a constant sequence (i.e., $\alpha_k\equiv \alpha$), and $\alpha$ is chosen such that $\alpha \tau_\alpha\leq \frac{\kappa}{130\max(L_1,L_2)^2}$, where $\tau_\alpha:=\min\{k\geq 0\;:\;\max_{y\in\mathcal{Y}}\|P^k(y,\cdot)-\nu(\cdot)\|_{\text{TV}}\leq\alpha\}$.
\end{enumerate}
Then we have for any $k\geq \tau_\alpha$ that
\begin{align*}
    \mathbb{E}[\|x_k-x^*\|_2^2]\leq (\|x_0\|_2+\|x_0-x^*\|_2+1)^2(1-\kappa \alpha)^{k-\tau_\alpha}+130(L_1\|x^*\|_2+L_2)^2\frac{\alpha \tau_\alpha}{\kappa}.
\end{align*}
\end{theorem}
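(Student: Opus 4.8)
The plan is to establish a one-step contraction recursion for the Lyapunov function $\mathbb{E}[\|x_k-x^*\|_2^2]$ and then unroll it. Writing $z_k=x_k-x^*$ and squaring the update gives
\[
\mathbb{E}[\|z_{k+1}\|_2^2]=\mathbb{E}[\|z_k\|_2^2]+2\alpha\,\mathbb{E}[z_k^\top F(x_k,Y_k)]+\alpha^2\,\mathbb{E}[\|F(x_k,Y_k)\|_2^2].
\]
The quadratic term is routinely controlled via assumption (2), since $\|F(x_k,Y_k)\|_2\leq L_1\|z_k\|_2+(L_1\|x^*\|_2+L_2)$, so it contributes an $O(\alpha^2)$ term. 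The entire difficulty lies in the cross term $\mathbb{E}[z_k^\top F(x_k,Y_k)]$: were $Y_k$ drawn from the stationary law $\nu$ and independent of $x_k$, assumption (3) would immediately yield $\mathbb{E}[z_k^\top F(x_k,Y_k)]=\mathbb{E}[z_k^\top\bar F(x_k)]\leq-\kappa\,\mathbb{E}[\|z_k\|_2^2]$. The Markovian dependence of $Y_k$ on the very trajectory that produced $x_k$ is precisely what must be overcome.

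First I would derive an a priori slow-drift estimate: the increment satisfies $\|x_{k+1}-x_k\|_2\leq\alpha(L_1\|z_k\|_2+L_1\|x^*\|_2+L_2)$, and telescoping over a window of length $\tau_\alpha$ shows $\|x_k-x_{k-\tau_\alpha}\|_2=O(\alpha\tau_\alpha)$ relative to the iterate magnitude. The key manipulation is then the decomposition
\[
F(x_k,Y_k)-\bar F(x_k)=\underbrace{\bigl[F(x_k,Y_k)-F(x_{k-\tau_\alpha},Y_k)\bigr]}_{\text{(i)}}+\underbrace{\bigl[F(x_{k-\tau_\alpha},Y_k)-\bar F(x_{k-\tau_\alpha})\bigr]}_{\text{(ii)}}+\underbrace{\bigl[\bar F(x_{k-\tau_\alpha})-\bar F(x_k)\bigr]}_{\text{(iii)}}.
\]
Terms (i) and (iii) are handled by Lipschitz continuity together with the slow-drift estimate, yielding $O(\alpha\tau_\alpha)$ bounds on their inner product with $z_k$. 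For the middle term (ii) I would condition on $\mathcal{F}_{k-\tau_\alpha}$, under which $x_{k-\tau_\alpha}$ is measurable; the conditional law of $Y_k$ is $P^{\tau_\alpha}(Y_{k-\tau_\alpha},\cdot)$, which by the definition of $\tau_\alpha$ lies within total variation $\alpha$ of $\nu$. Since $F(x_{k-\tau_\alpha},\cdot)$ is bounded uniformly in the state in terms of $\|z_{k-\tau_\alpha}\|_2$, this gives $\|\mathbb{E}[F(x_{k-\tau_\alpha},Y_k)\mid\mathcal{F}_{k-\tau_\alpha}]-\bar F(x_{k-\tau_\alpha})\|_2=O(\alpha)$ times a linear expression in $\|z_{k-\tau_\alpha}\|_2$.

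Combining these estimates converts the cross term into $-2\kappa\alpha\,\mathbb{E}[\|z_k\|_2^2]$ plus error terms that are each $O(\alpha^2\tau_\alpha)$ multiplied by quadratic expressions in the iterate norms. Absorbing these perturbations using the stepsize restriction $\alpha\tau_\alpha\leq\kappa/(130\max(L_1,L_2)^2)$ — where the explicit constant $130$ is exactly what guarantees that the perturbed contraction factor is dominated by $(1-\kappa\alpha)$ after using half the drift to swallow the errors — yields the clean recursion $\mathbb{E}[\|z_{k+1}\|_2^2]\leq(1-\kappa\alpha)\mathbb{E}[\|z_k\|_2^2]+130(L_1\|x^*\|_2+L_2)^2\alpha^2\tau_\alpha$ for all $k\geq\tau_\alpha$. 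Unrolling from $k=\tau_\alpha$ produces the geometric transient $(1-\kappa\alpha)^{k-\tau_\alpha}$ and, via the geometric sum $\sum_j(1-\kappa\alpha)^j\leq 1/(\kappa\alpha)$, the steady-state term $130(L_1\|x^*\|_2+L_2)^2\alpha\tau_\alpha/\kappa$; the initial block $0\leq k\leq\tau_\alpha$ is absorbed through a crude Gr\"onwall-type bound on $\mathbb{E}[\|z_{\tau_\alpha}\|_2^2]$ yielding the $(\|x_0\|_2+\|x_0-x^*\|_2+1)^2$ prefactor. The main obstacle is the bookkeeping in term (ii): tracking the coupling between the slowly drifting iterate and the mixing chain while keeping every constant small enough that the stepsize condition really delivers effective $(1-\kappa\alpha)$ contraction rather than a degraded rate.
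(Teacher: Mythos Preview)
The paper does not give its own proof of this statement: it is quoted verbatim as Corollary~2.1 of \cite{chen2019finitesample} and then \emph{applied} to the inner loop, with the paper's work consisting only of verifying Conditions (1)--(4) for the specific $F$ arising from Algorithm~\ref{alg:inner-loop}. So there is no paper proof to compare against.

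That said, your sketch is the standard argument used in \cite{chen2019finitesample} and \cite{srikant2019finite}: expand $\|z_{k+1}\|_2^2$, isolate the cross term, replace $F(x_k,Y_k)$ by $\bar F(x_k)$ via the look-back decomposition at lag $\tau_\alpha$, control (i) and (iii) by slow drift and (ii) by geometric mixing after conditioning on $\mathcal{F}_{k-\tau_\alpha}$, absorb the $O(\alpha^2\tau_\alpha)$ remainders using the stepsize restriction, and unroll. The structure and the role of the constant $130$ are exactly as you describe, so your proposal is correct and aligned with the cited source.
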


To apply Theorem \ref{thm:chen2019}, we first rewrite the update equation in line 4 of Algorithm \ref{alg:inner-loop} in the form of stochastic approximation algorithm (\ref{sa:chen2019}). Then we verify that Conditions (1) -- (4) are satisfied.

\begin{itemize}
\item \textit{Reformulation.} For any $k\geq 0$, let $Y_k=(S_k,A_k,S_{k+1})$, which is clearly a Markov chain with state-space given by $\mathcal{Y}=\{y=(s,a,s')\mid s\in\mathcal{S},\pi_b(a|s)>0,P_a(s,s')>0\}$. Define the function $F:\mathbb{R}^d\times\mathcal{Y}\mapsto\mathbb{R}^d$ by 
\begin{align*}
    F(\theta,s,a,s')=\phi(s,a)\left(\mathcal{R}(s,a)+\gamma\max_{a'\in\mathcal{A}}\ceil{\phi(s',a')^\top\hat{\theta}}-\phi(s,a)^\top \theta\right)
\end{align*}
for any $\theta\in\mathbb{R}^d$ and $y=(s,a,s')\in\mathcal{Y}$. Then the update equation of Algorithm \ref{alg:inner-loop} can be equivalently written as
\begin{align}
    \theta_{k+1}=\theta_k+\alpha F(\theta_k,Y_k).
\end{align}
\item \textit{Verification of Condition (1).} Under Assumption \ref{as:MC}, the Markov chain $\{Y_k\}$ has a unique stationary distribution $\nu$, which is given by $\nu(s,a,s')=\mu(s)\pi(a|s)P_a(s,s')$ for all $(s,a,s')\in\mathcal{Y}$. In addition, we have for any $y=(s,a,s')\in\mathcal{Y}$ that
\begin{align*}
    \|P_{\pi_b}^k(y,\cdot)-\nu(\cdot)\|_{\text{TV}}&= \frac{1}{2}\sum_{(s_0,a_0,s_1)\in\mathcal{Y}}\left|P_{\pi_b}^{k-1}(s',s_0)-\mu(s_0)\right|\pi(a_0|s_0)P_{a_0}(s_0,s_1)\\
    &\leq \frac{1}{2}\sum_{s_0\in\mathcal{S}}\left|P_{\pi_b}^{k-1}(s',s_0)-\mu(s_0)\right|\\
    &\leq \max_{s\in\mathcal{S}}\|P_{\pi_b}^{k-1}(s,\cdot)-\mu(\cdot)\|_{\text{TV}}\\
    &\leq C\rho^{k-1}.
\end{align*}
\item \textit{Verification of Condition (2).} For any $x_1,x_2\in\mathbb{R}^d$ and $y=(s,a,s')\in\mathcal{Y}$, we have
\begin{align*}
    \|F(\theta_1,y)-F(\theta_2,y)\|_2&=\|\phi(s,a)\phi(s,a)^\top (\theta_1-\theta_2)\|_2\\
    &\leq \|\phi(s,a)\|_2^2\|\theta_1-\theta_2\|_2\\
    &\leq \|\theta_1-\theta_2\|_2.\tag{$\|\phi(s,a)\|_2\leq \|\phi(s,a)\|_1\leq 1$ for all $(s,a)$}
\end{align*}
Similarly, we have for any $y=(s,a,s')\in\mathcal{Y}$ that
\begin{align*}
    \|F(\bm{0},y)\|_2&=\left\|\phi(s,a)\left(\mathcal{R}(s,a)+\gamma\max_{a'\in\mathcal{A}}\ceil{\phi(s',a')^\top\hat{\theta}}\right)\right\|_2\\
    &\leq \left(1+\frac{\gamma}{1-\gamma}\right)\|\phi(s,a)\|_2\\
    &\leq \frac{1}{1-\gamma}.
\end{align*}
\item \textit{Verification of Condition (3).} By definition of $F(\cdot,\cdot)$, we have 
\begin{align*}
    \bar{F}(\theta)&=\mathbb{E}_{(S_k,A_k,S_{k+1})\sim \nu}\left[\phi(S_k,A_k)\left(\mathcal{R}(S_k,A_k)+\gamma\max_{a'\in\mathcal{A}}\ceil{\phi(S_{k+1},a')^\top\hat{\theta}}-\phi(S_k,A_k)^\top \theta\right)\right]\\
    &=-\Phi^\top D\Phi\theta+\Phi^\top D\mathcal{H}(\ceil{\Phi\hat{\theta}}),
\end{align*}
where we recall that $D\in\mathbb{R}^{|\mathcal{S}||\mathcal{A}|\times |\mathcal{S}||\mathcal{A}|}$ is a diagonal matrix with diagonal entries $\{\mu(s)\pi_b(a|s)\}_{(s,a)\in\mathcal{S}\times\mathcal{A}}$. Since $\Phi$ has linearly independent columns, the matrix $\Phi^\top D\Phi$ is invertible. Solving $\bar{F}(\theta)=0$ and we obtain $\theta^*=(\Phi^\top D\Phi)^{-1}\Phi^\top D\mathcal{H}(\ceil{\Phi\hat{\theta}})$. Furthermore, note that the matrix $\Phi^\top D\Phi$ is positive definite, whose smallest eigenvalue is denoted by $\lambda_{\min}$. Therefore we have for any $\theta\in\mathbb{R}^d$:
\begin{align*}
    (\theta-\theta^*)^\top\bar{F}(\theta)&=(\theta-\theta^*)^\top (\bar{F}(\theta)-\bar{F}(\theta^*))\\
    &=-(\theta-\theta^*)^\top \Phi^\top D\Phi(\theta-\theta^*)\\
    &\leq -\lambda_{\min}\|\theta-\theta^*\|_2^2.
\end{align*}
\item \textit{Verification of Condition (4).} This is satisfied due to our choice of the constant stepsize $\alpha$ in Theorem \ref{thm:main}. 
\end{itemize}

Now that all Conditions are satisfied. Apply Theorem \ref{thm:chen2019} and we obtain for any $k\geq t_\alpha+1$:
\begin{align}\label{eq:144}
    \mathbb{E}[\|\theta_k-\theta^*\|_2^2]\leq (\|\theta^*\|_2+1)^2(1-\lambda_{\min} \alpha)^{k-t_\alpha-1}+\frac{130}{(1-\gamma)^2}((1-\gamma)\|\theta^*\|_2+1)^2\frac{\alpha (t_\alpha+1)}{\lambda_{\min}},
\end{align}
where we used $\theta_0=\bm{0}$ in Algorithm \ref{alg:inner-loop}.
The last step is to provide an upper bound on $\|\theta^*\|_2$. Note that
\begin{align*}
    \|\theta^*\|_2&=\frac{1}{\lambda_{\min}^{1/2}}\lambda_{\min}^{1/2}\|\theta^*\|_2\\
    &\leq \frac{1}{\lambda_{\min}^{1/2}}\|\Phi\theta^*\|_D\\
    &=\frac{1}{\lambda_{\min}^{1/2}}\|\Phi(\Phi^\top D\Phi)^{-1}\Phi^\top D\mathcal{H}(\ceil{\Phi\hat{\theta}})\|_D\tag{$\bar{F}(\theta^*)=0$}\\
    &=\frac{1}{\lambda_{\min}^{1/2}}\|\text{Proj}_{\mathcal{W}}\mathcal{H}(\ceil{\Phi\hat{\theta}})\|_D\\
    &\leq \frac{1}{\lambda_{\min}^{1/2}}\|\mathcal{H}(\ceil{\Phi\hat{\theta}})\|_D\tag{$\text{Proj}_{\mathcal{W}}(\cdot)$ is non-expansive with respect to $\|\cdot\|_D$}\\
    &\leq \frac{1}{\lambda_{\min}^{1/2}(1-\gamma)}\|\bm{1}\|_D\tag{$-\frac{1}{1-\gamma}\bm{1}\leq \mathcal{H}(\ceil{Q})\leq \frac{1}{1-\gamma}\bm{1}$ for any $Q\in\mathbb{R}^{|\mathcal{S}||\mathcal{A}|}$}\\
    &=\frac{1}{\lambda_{\min}^{1/2}(1-\gamma)}.
\end{align*}
Substituting the previous upper bound we obtained for $\|\theta^*\|_2$ into Eq. (\ref{eq:144}) and we finally have for all $k\geq t_\alpha+1$:
\begin{align}\label{eq:21}
    \mathbb{E}[\|\theta_k-\theta^*\|_2^2]\leq \frac{4}{\lambda_{\min}(1-\gamma)^2}(1-\lambda_{\min} \alpha)^{k-t_\alpha-1}+\frac{520}{\lambda_{\min}^2(1-\gamma)^2}\alpha (t_\alpha+1).
\end{align}

\subsection{Putting Together}
In this section, we combine the analysis of the outer-loop and the inner-loop to establish the overall finite-sample bounds of Algorithm \ref{alg:main}. Denote $\theta^*_t=(\Phi^\top D\Phi)^{-1}\Phi^\top D\mathcal{H}(\hat{Q}_t)$. Note that we have $\Phi\theta^*_t=\text{Proj}_{\mathcal{W}}\mathcal{H}(\hat{Q}_t)$. Using the fact that $\|\cdot\|_\infty\leq \|\cdot\|_2$ and we obtain for any $0\leq i\leq T$:
\begin{align*}
    \mathbb{E}[\|\Phi \theta_{i,K}-\text{Proj}_{\mathcal{W}}\mathcal{H}(\hat{Q}_i)\|_\infty]&=\mathbb{E}[\|\Phi (\theta_{i,K}-\theta^*_i)\|_\infty]\\
    &\leq \mathbb{E}[\|\Phi\|_\infty\| \theta_{i,K}-\theta^*_i\|_\infty]\\
    &\leq \mathbb{E}[\|\theta_{i,K}-\theta^*_i\|_\infty]\tag{$\|\phi(s,a)\|_1\leq 1$ for all $(s,a)$}\\
    &\leq \mathbb{E}[\|\theta_{i,K}-\theta^*_i\|_2]\\
    &\leq \left(\mathbb{E}[\|\theta_{i,K}-\theta^*_i\|_2^2]\right)^{1/2}\tag{Jensen's inequality}\\
    &\leq \left(\frac{4}{\lambda_{\min}(1-\gamma)^2}(1-\lambda_{\min} \alpha)^{K-t_\alpha-1}+\frac{520}{\lambda_{\min}^2(1-\gamma)^2}\alpha (t_\alpha+1)\right)^{1/2}\tag{Eq. (\ref{eq:21})}\\
    &\leq \frac{2}{\lambda_{\min}^{1/2}(1-\gamma)}(1-\lambda_{\min}\alpha)^{\frac{K-t_\alpha-1}{2}}+\frac{24}{\lambda_{\min}(1-\gamma)}\sqrt{\alpha (t_\alpha+1)},
\end{align*}
where the last line follows from $\sqrt{a+b}\leq \sqrt{a}+\sqrt{b}$ for any $a,b\geq 0$.

Substituting the previous inequality into Eq. (\ref{eq:20}), and we obtain the overall finite-sample guarantees of Algorithm \ref{alg:main}:
\begin{align*}
    \mathbb{E}[\|\hat{Q}_T-Q^*\|_\infty]\leq\;& \gamma^T\|\hat{Q}_0-Q^*\|_\infty+\frac{2}{\lambda_{\min}^{1/2}(1-\gamma)^2}(1-\lambda_{\min}\alpha)^{\frac{K-t_\alpha-1}{2}}\\
    &+\frac{24}{\lambda_{\min}(1-\gamma)^2}\sqrt{\alpha (t_\alpha+1)}+\frac{\mathcal{E}_{\text{approx}}}{1-\gamma}.
\end{align*}

In view of the finite-sample guarantee, to obtain $\mathbb{E}[\|\hat{Q}_T-Q^*\|_\infty]\leq \epsilon+\frac{\mathcal{E}_{\text{approx}}}{1-\gamma}$ for a given accuracy $\epsilon$, the number of sample required is of the size
\begin{align*}
    \mathcal{O}\left(\epsilon^{-2}\log^2(1/\epsilon)\right)\Tilde{\mathcal{O}}\left(\frac{1}{(1-\gamma)^4}\right).
\end{align*}

\section{Proof of All Technical Results in Section \ref{sec:roadmap}}

\subsection{Proof of Proposition \ref{prop:a.s.}}\label{ap:a.s.convergence}
The proof is identical to that of Theorem \ref{thm:main}, and hence is omitted.

\subsection{Proof of Proposition \ref{prop:outer_loop}}

See Appendix \ref{ap:outer_loop}.

\subsection{Proof of Lemma \ref{le:PBE}}
We first compute the transition probability matrix of the Markov chain $\{S_k\}$ under $\pi_b$. Since
\begin{align*}
    P_{a_1}=\begin{bmatrix}
    1\;\;& 0\\
    1\;\;& 0
    \end{bmatrix}\quad \text{and}\quad P_{a_2}=\begin{bmatrix}
    0\;\;& 1\\
    0\;\;& 1
    \end{bmatrix},
\end{align*}
and $\pi(a|s)=1/2$ for any $a\in \{a_1,a_2\}$ and $s\in \{s_1,s_2\}$, we have $P_{\pi_b}=\frac{1}{2}I_2$. As a result, the unique stationary distribution $\mu$ of the Markov chain $\{S_k\}$ under $\pi_b$ is given by $\mu=(1/2,1/2)$. Therefore, the matrix $D\in\mathbb{R}^{|\mathcal{S}||\mathcal{A}|\times |\mathcal{S}||\mathcal{A}|}$ (defined before Theorem \ref{thm:main}) is given by $D=\frac{1}{4}I_4$. We next compute Eq. (\ref{eq:31}) in this example. First of all, by definition of the Bellman operator we have for any $\theta\in\mathbb{R}$ that
\begin{align*}
    [\mathcal{H}(\Phi \theta)](s_1,a_1)&=\mathcal{R}(s_1,a_1)+\gamma\mathbb{E}[\max_{a'\in\mathcal{A}}\phi(S_{k+1},a')\theta\mid S_k=s_1,A_k=a_1]\\
    &=\mathcal{R}(s_1,a_1)+\gamma \max_{a'\in\mathcal{A}}\phi(s_1,a')\theta\\
    &=\begin{dcases}
    1+2\gamma\theta,&\theta\geq 0,\\
    1+\gamma\theta,&\theta<0.
    \end{dcases}
\end{align*}
Similarly, we also have
\begin{align*}
    [\mathcal{H}(\Phi \theta)](s_1,a_2)&=\begin{dcases}
    2+4\gamma\theta,&\theta\geq 0,\\
    2+2\gamma\theta,&\theta<0.
    \end{dcases}&\quad [\mathcal{H}(\Phi \theta)](s_2,a_1)&=\begin{dcases}
    2+2\gamma\theta,&\theta\geq 0,\\
    2+\gamma\theta,&\theta<0.
    \end{dcases}\\
    [\mathcal{H}(\Phi \theta)](s_2,a_2)&=\begin{dcases}
    4+4\gamma\theta,&\theta\geq 0,\\
    4+2\gamma\theta,&\theta<0.
    \end{dcases}
\end{align*}
Therefore, Eq. (\ref{eq:31}) in the case of Example \ref{example1} is explicitly given by
\begin{align*}
    \theta&=(\Phi^\top D\Phi)^{-1}\Phi^\top D\mathcal{H}(\Phi\theta)\\
    &=\begin{dcases}
    \frac{1}{25}\begin{bmatrix}
    1\;& 2\;& 2\;&4
    \end{bmatrix}
    \begin{bmatrix}
    1+2\gamma\theta\\
    2+4\gamma\theta\\
    2+2\gamma\theta\\
    4+4\gamma\theta
    \end{bmatrix},&\theta\geq 0\\
    \frac{1}{25}\begin{bmatrix}
    1\;& 2\;& 2\;&4
    \end{bmatrix}
    \begin{bmatrix}
    1+\gamma\theta\\
    2+2\gamma\theta\\
    2+\gamma\theta\\
    4+2\gamma\theta
    \end{bmatrix},&\theta< 0
    \end{dcases}\\
    &=\begin{dcases}
    1+\frac{6}{5}\gamma\theta,&\theta\geq 0,\\
    1+\frac{3}{4}\gamma\theta,&\theta<0,
    \end{dcases}\\
    &=1+\frac{9}{10}\gamma\theta+\frac{3}{10}\gamma\theta(\mathbb{I}_{\{\theta\geq 0\}}-\mathbb{I}_{\{\theta< 0\}}).
\end{align*}

\subsection{Proof of Lemma \ref{le:truncation}}

Let $\{\nu(s,a)\}_{(s,a)\in\mathcal{S}\times\mathcal{A}}$ be any positive weights, and denote the weighted $\ell_p$-norm with weights $\{\nu(s,a)\}_{(s,a)\in\mathcal{S}\times\mathcal{A}}$ by $\|\cdot\|_{\nu,p}$. For any $x\in\mathbb{R}^{|\mathcal{S}||\mathcal{A}|}$, we have
\begin{align*}
    \min_{y\in B_r}\|x-y\|_{\nu,p}&=\min_{y\in B_r}\left(\sum_{s,a}\nu(s,a)|x(s,a)-y(s,a)|^p\right)^{1/p}\\
    &=\left(\sum_{s,a}\nu(s,a)\min_{-r\leq y(s,a)\leq r}|x(s,a)-y(s,a)|^p\right)^{1/p}\\
    &=\left(\sum_{s,a}\nu(s,a)|x(s,a)-\ceil{x(s,a)}|^p\right)^{1/p}\\
    &=\|x-\ceil{x}\|_{\nu,p}.
\end{align*}
Therefore, we have $\ceil{x}\in\arg\min_{y\in B_r}\|x-y\|_{\nu,p}$.

\section{Future Work}\label{ap:future}

\subsection{Establishing the Asymptotic Convergence and Improving the Function Approximation Error}
Although Theorem \ref{thm:main} establishes the mean-square error bound of $Q$-learning with linear function approximation, due to the function approximation error, the bound does not imply asymptotic convergence. In light of our discussion in Section \ref{sec:roadmap}, suppose Algorithm \ref{alg:main} indeed converges (as $K,T\rightarrow \infty$ and $\alpha\rightarrow 0$) . The corresponding $Q$-function estimate of the output, i.e., $\hat{Q}_T=\ceil{\Phi\hat{\theta}_T}$, can only converge to the solution of the \textit{truncated projected Bellman equation}:
\begin{align}\label{eq:TPBE}
    Q=\ceil{\text{Proj}_{\mathcal{W}}\mathcal{H}(Q)}.
\end{align}
Unlike projected Bellman equation (\ref{eq:PBE}), which may not have a solution in general (cf. Example \ref{example1}), since the truncated projected Bellman operator maps a compact set $B_r$ to itself, Eq. (\ref{eq:TPBE}) must have at least one solution according to the Brouwer fixed-point theorem. However, whether the solution to Eq. (\ref{eq:TPBE}) is unique or not is unclear. Therefore, it is also unclear if performing fixed-point iteration to solve Eq. (\ref{eq:TPBE}), or its stochastic variant (i.e., Algorithm \ref{alg:main}) can actually leads to asymptotic convergence. Further investigating the truncated projected Bellman equation to show asymptotic convergence is one of our immediate future directions.

Suppose we were able to show the asymptotic convergence of Algorithm \ref{alg:main} to the unique solution of the truncated projected Bellman equation (\ref{eq:TPBE}), denoted by $\bar{Q}$. Then, instead of establishing finite-sample bound of the form
\begin{align}\label{eq:40}
    \mathbb{E}[\|\hat{Q}_T-Q^*\|_\infty]\leq \underbrace{E_1+E_2+E_3}_{\text{go to zero as $K,T\rightarrow\infty$ and $\alpha\rightarrow 0$}}+\underbrace{E_4,}_{\text{Function approximation error}}
\end{align}
which is in fact what we did in this work,
we would seek to establish the finite-sample bound of $\mathbb{E}[\|\hat{Q}_T-\bar{Q}\|_\infty]$, and separately characterize the difference between $Q^*$ and $\bar{Q}$. This is in the same spirit of the seminal work \cite{tsitsiklis1997analysis}, which studies the TD-learning with linear function approximation algorithm for policy evaluation. There are two advantages of this alternative approach. One is that the sample complexity of $\hat{Q}_T$ converging to $\bar{Q}$ is well-defined once we establish finite-sample convergence of $\mathbb{E}[\|\hat{Q}_T-\bar{Q}\|_\infty]$ to zero, while the sample complexity of convergence bounds of the form (\ref{eq:40}) is strictly speaking not well-defined because of the additive constant $E_4$, and may lead to erroneous result, as illustrated in \cite{khodadadian2021finite2} Appendix C. Second, this approach would enable us to reduce the function approximation error by removing the $\sup$ operator in $\mathcal{E}_{\text{approx}}$, i.e.,  from the current $\sup_{Q:\|Q\|_\infty\leq r}\|\ceil{\text{Proj}_{\mathcal{W}}\mathcal{H}(Q)}-\mathcal{H}(Q)\|_\infty$ to  $\|\ceil{\text{Proj}_{\mathcal{W}}\mathcal{H}(\bar{Q})}-\mathcal{H}(\bar{Q})\|_\infty$.

Although the lack of asymptotic convergence is a major limitation of this work, we want to point out that such limitation is present in almost all related literature on both value-space and policy-space methods whenever function approximation is used. To our knowledge, the only exception is \cite{tsitsiklis1997analysis} (as well as its follow-up work), where asymptotic convergence was established for TD-learning, and the limit was characterized as the unique solution of the projected Bellman equation. Other literature studying RL with function approximation either do not have asymptotic convergence \cite{agarwal2021theory}, or have asymptotic convergence without knowing where the limit is \cite{maei2010toward}.

\subsection{The Deep $Q$ Network}\label{ap:DQN}
The ultimate goal of this line of work is to provide theoretical understanding to the celebrated Deep $Q$-Network. We first present the extension of our Algorithm \ref{alg:main} to the setting where we use arbitrary function approximation (cf. Algorithm \ref{alg:nonlinear}). Let $\mathcal{F}=\{f_\theta:\mathcal{S}\times\mathcal{A}\mapsto\mathbb{R}\mid \theta\in\mathbb{R}^d\}$ be a parametric function class (with parameter $\theta$). For example, $\mathcal{F}$ can be the set of functions representable by a certain neural network, and $\theta$ is the corresponding weight vector.

\begin{algorithm}[h]\caption{$Q$-Learning with Arbitrary Function Approximation}\label{alg:nonlinear}
\begin{algorithmic}[1] 
	\STATE {\bfseries Input:} Integers $T$, $K$, initialization $\theta_{0,0}=\hat{\theta}_0=\bm{0}$
	\FOR{$t=0,1,\cdots,T-1$}
	\FOR{$k=0,1,\cdots,K-1$}
	\STATE Sample $A_k\sim \pi_b(\cdot|S_k)$, observe $S_{k+1}\sim P_{A_k}(S_k,\cdot)$
	\STATE $\theta_{t,k+1}=\theta_{t,k}+\alpha_k \nabla f_{\theta_{t,k}}(S_k,A_k)(\mathcal{R}(S_k,A_k)+\gamma\max_{a'\in\mathcal{A}}\ceil{f_{\hat{\theta}_t}(S_{k+1},a')}-f_{\theta_{t,k}}(S_k,A_k))$
	\ENDFOR
	\STATE $\hat{\theta}_{t+1}=\theta_{t,K}$
	\STATE $S_0=S_K$
	\ENDFOR
	\STATE\textbf{Output:} $\hat{\theta}_T$
\end{algorithmic}
\end{algorithm}

While the algorithm easily extends, the theoretical results do not. In particular, there are two major challenges.

\begin{enumerate}[(1)]
    \item With recent advances in deep learning \cite{roberts2021principles}, it is possible to explicitly characterize the function approximation error $\mathcal{E}_{\text{approx}}$ as a function of the hyper-parameters of the chosen neural network, such as the width, the number of layers, and the H\"{o}lder continuity parameter, etc.
    \item A more significant challenge is about the convergence of the inner-loop of Algorithm \ref{alg:nonlinear}. Recall that in the linear function approximation setting, the inner loop (line 5 of Algorithm \ref{alg:main}) can be viewed as a one-step Markovian stochastic approximation for solving the linear system of equations $-\Phi^\top D\Phi \theta+\Phi^\top D\mathcal{H}(\ceil{\Phi\hat{\theta}_t})=0$, or a one-step Markovian stochastic gradient descent for minimizing a quadratic objective $\|\Phi \theta-\mathcal{H}(\ceil{\Phi\hat{\theta}_t})\|_D^2$ in terms of $\theta$. In this case, convergence to the global optimal of the inner-loop iterates is well established in the literature. Now consider using arbitrary function approximation in Algorithm \ref{alg:nonlinear}. Although the inner-loop (line 5) is still performing a one-step Markovian stochastic gradient descent for minimizing $\|f_\theta-\mathcal{H}(\ceil{f_{\hat{\theta}_t}})\|_D^2$ in terms of $\theta$, since the objective is now in general non-convex, the convergence to global optimal remains as a major theoretical open problem in the deep learning community.
\end{enumerate}

Although the Deep $Q$-Network was previously studied in \cite{fan2020theoretical}, their results rely on the following two assumptions: (1) the function approximation space is closed under the Bellman operator, and (2) there exists an oracle that returns the global optimal of non-convex optimization problems. Under these two assumptions, both challenges described
earlier are no longer present.

Once we explicitly characterize the function approximation error $\mathcal{E}_{\text{approx}}$, and show global convergence of the inner-loop, substituting the result into our analysis framework and we would be able to obtain finite-sample guarantees of Deep $Q$-Network, thereby achieving the ultimate goal of this line of research.

\section{Related Literature and Their Limitations}\label{ap:literature}

To complement Section \ref{subsec:literature}, we here present a more detailed discussion about related literature that requires strong negative drift assumptions, and that achieves stability of $Q$-learning with linear function approximation by implicitly changing the problem parameters.

\subsection{Strong Negative Drift Assumption}\label{ap:negative_drift}

As mentioned in Section \ref{subsec:literature}, classical $Q$-learning with linear function approximation (cf. Algorithm (\ref{alg:semi_gradient_QLFA})) was studied in \cite{melo2008analysis,chen2019finitesample, lee2019unified,xu2020finite,cai2019neural} and many other follow-up work under a strong negative drift assumption. While the specific assumption varies, they are all in the same spirit that the assumption should ensure that the associated ODE of $Q$-learning with linear function approximation is globally asymptotically stable, which essentially guarantees the stability of the algorithm \cite{borkar2009stochastic}. 

The negative drift assumptions are highly restrictive. To see this, we here present the assumption proposed in \cite{melo2008analysis} as an illustrative example:
\begin{align}\label{con}
    2\gamma^2\mathbb{E}_\mu[(\max_{a\in\mathcal{A}}\phi(S,a)^\top \theta)^2]<\mathbb{E}_{\mu,\pi_b}[(\phi(S,A)^\top \theta)^2],\;\forall\;\theta\neq \bm{0},
\end{align}
where the factor of $2$ is missing in \cite{melo2008analysis}. Since Condition (\ref{con}) needs to hold for all $\theta\neq \bm{0}$, it is not clear if it can be satisfied even if we choose the optimal policy as the behavior policy. Intuitively, to satisfy Condition (\ref{con}), the discount factor $\gamma$ should be extremely small.

To see more explicitly the restrictiveness of Condition (\ref{con}), we consider the case where $d=|\mathcal{S}||\mathcal{A}|$. A special case of this is when $\Phi$ is an identity matrix, which corresponds to the tabular setting. Since it is known that tabular $Q$-learning does not Condition (\ref{con}) to converge \citep{tsitsiklis1994asynchronous}, we would expect that Condition (\ref{con}) is automatically satisfied. However, the following result implies that Condition (\ref{con}) remains highly restrictive even when $d=|\mathcal{S}||\mathcal{A}|$.

\begin{lemma}\label{le:infeasible}
When $d=|\mathcal{S}||\mathcal{A}|$, then it is not possible to satisfy Condition (\ref{con}) when $\gamma\geq  \frac{1}{\sqrt{2|\mathcal{A}|}}$, where $|\mathcal{A}|$ is the size of the action-space.
\end{lemma}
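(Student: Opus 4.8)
The plan is to reduce Condition (\ref{con}) to a statement about arbitrary $Q$-functions and then exhibit one explicit violating example. Since $d=|\mathcal{S}||\mathcal{A}|$, the columns of $\Phi$ being linearly independent force $\Phi$ to be an invertible square matrix, so $\theta\mapsto \Phi\theta$ is a bijection of $\mathbb{R}^d$ onto $\mathbb{R}^{|\mathcal{S}||\mathcal{A}|}$. Writing $Q=\Phi\theta$ and $\phi(s,a)^\top\theta = Q(s,a)$, Condition (\ref{con}) becomes equivalent to: for every nonzero $Q\in\mathbb{R}^{|\mathcal{S}||\mathcal{A}|}$,
\[
2\gamma^2\sum_{s}\mu(s)\Big(\max_{a\in\mathcal{A}}Q(s,a)\Big)^2\;<\;\sum_{s}\mu(s)\sum_{a\in\mathcal{A}}\pi_b(a|s)\,Q(s,a)^2 .
\]
To prove infeasibility it then suffices to construct a single nonzero $Q$ for which this strict inequality fails.

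Next I would pick the test function adversarially so as to make the left side as large as possible relative to the right side. Fix an arbitrary state $s_0$. Because $\sum_{a\in\mathcal{A}}\pi_b(a|s_0)=1$ and there are $|\mathcal{A}|$ actions, a pigeonhole argument gives an action $a^*$ with $\pi_b(a^*|s_0)\le 1/|\mathcal{A}|$. Define $Q$ by $Q(s_0,a^*)=1$ and $Q(s,a)=0$ for every other pair; this $Q$ is nonzero. Then $\max_{a}Q(s,a)$ equals $1$ at $s=s_0$ and $0$ elsewhere, so the left side of the displayed inequality equals $2\gamma^2\mu(s_0)$, while the right side equals $\mu(s_0)\,\pi_b(a^*|s_0)\le \mu(s_0)/|\mathcal{A}|$. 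Since $\mu(s_0)>0$ under Assumption \ref{as:MC}, the required inequality would force $2\gamma^2<1/|\mathcal{A}|$, i.e.\ $\gamma<1/\sqrt{2|\mathcal{A}|}$. Hence whenever $\gamma\ge 1/\sqrt{2|\mathcal{A}|}$ this $Q$ violates it, so Condition (\ref{con}) cannot hold.

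The argument is elementary and I anticipate no genuine obstacle; the only thing one must get right is the choice of test vector. The naive choice $Q(s,\cdot)$ constant in the action only yields the weaker threshold $\gamma\ge 1/\sqrt{2}$; the key point is that concentrating all the ``max-energy'' on one coordinate whose behavior-policy weight is at most the averaging bound $1/|\mathcal{A}|$ is exactly what produces the sharp constant $1/\sqrt{2|\mathcal{A}|}$. A minor checkpoint is to confirm that the $\max$ over actions at $s_0$ is unaffected by the zero entries and that the factor $\mu(s_0)$ cancels from both sides, so that the conclusion holds for every behavior policy irrespective of its induced stationary distribution.
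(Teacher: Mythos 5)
Your proof is correct and essentially the same as the paper's: both arguments test Condition (\ref{con}) with a vector whose induced $Q$-function is supported on a single state-action pair and then use $\min_{a}\pi_b(a|s)\le 1/|\mathcal{A}|$ to force $2\gamma^2<1/|\mathcal{A}|$. The only cosmetic difference is that you realize such a test vector via the invertibility of the square matrix $\Phi$ (writing $Q=\Phi\theta$), whereas the paper picks $\theta$ in the orthogonal complement of the remaining feature vectors, which amounts to the same construction.
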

\begin{proof}[Proof of Lemma \ref{le:infeasible}]
Lemma \ref{le:infeasible} is entirely similar to \cite{chen2019finitesample} Proposition 3. We here present its proof to make this paper self-contained. Define 
\begin{align*}
    \Theta_{s,a}=\text{span}\left(\left\{\phi(s',a')| (s',a')\in\mathcal{S}\times\mathcal{A},\;(s',a')\neq (s,a)\right\}\right)^{\perp},
\end{align*}
which is the orthogonal complement of the span of the feature vectors $\{\phi(s',a')\}_{(s',a')\neq (s,a)}$. Let $\theta \in \Theta_{s,a}$ satisfying $\phi(s,a)^\top \theta>0$ (which is always possible). Then Condition (\ref{con}) implies 
\begin{align*}
    2\gamma^2\mu(s)(\phi(s,a)^\top \theta)^2<\mu(s)\pi_b(a|s)(\phi(s,a)^\top \theta)^2,
\end{align*}
which implies $\gamma^2< \frac{\pi_b(a|s)}{2}$. Since this is true for all $(s,a)$, we must have $\gamma^2< \min_{(s,a)}\frac{\pi_b(a|s)}{2}\leq \frac{1}{2|\mathcal{A}|}$. Therefore, when $\gamma\geq \frac{1}{\sqrt{2|\mathcal{A}|}}$, it is not possible to satisfy Condition (\ref{con}).
\end{proof}

In this work, we do not require any variants of the negative drift assumption to achieve the stability of $Q$-learning with linear function approximation. Removing such strong assumption in existing literature is a major contribution of this work.

\subsection{A Variant of $Q$-Learning with Target Network \cite{zhang2021breaking}}\label{ap:TN}

A variant of the $Q$-learning with linear function approximation was proposed in \cite{zhang2021breaking}. To overcome the divergence issue, they introduced target network in the algorithm. However, as we have shown in Section \ref{subsec:insufficiency_of_TN}, target network alone is not enough to stabilize $Q$-learning. The reason that \cite{zhang2021breaking} achieves convergence is by implicitly modifying the problem discount factor. To see this, consider the tabular setting where $\phi_i$, $1\leq i\leq d$ are chosen as the canonical basis vectors. Then the algorithm proposed in \cite{zhang2021breaking} aims at solving the following modified Bellman equation (Eq. (11) in \cite{zhang2021breaking}):
\begin{align}\label{eq:169}
    (I+\eta D^{-1}) Q=\mathcal{H}(Q),
\end{align}
where $D$ is a diagonal matrix with the stationary distribution of the Markov chain $\{(S_k,A_k)\}$ (induced by the behavior policy) on its diagonal, and $\eta>0$ is a tunable parameter introduced in \cite{zhang2021breaking} to stabilize $Q$-learning.

Now note that as long as $\eta\neq 0$, Eq. (\ref{eq:169}) is not the same as the original Bellman equation $Q=\mathcal{H}(Q)$, which implies that the algorithm in \cite{zhang2021breaking} does not converge to $Q^*$ even in the tabular setting. We next show that introducing $\eta>0$ in Eq. (\ref{eq:169}) is equivalent to artificially scaling down the discount factor $\gamma$ of the problem.

For ease of exposition, suppose that we are in the ideal setting where $D=\frac{1}{|\mathcal{S}||\mathcal{A}|}I$ (i.e., uniform exploration). Then Eq. (\ref{eq:169}) can be equivalently written by
\begin{align}\label{eq:225}
    Q(s,a)=\frac{1}{1+\eta|\mathcal{S}||\mathcal{A}|} \mathcal{R}(s,a)+\frac{\gamma}{1+\eta|\mathcal{S}||\mathcal{A}|}  \mathbb{E}[\max_{a'\in\mathcal{A}}Q(S_{k+1},a')\mid S_k=s,A_k=a],\quad \forall\;(s,a).
\end{align}
Compared to the original Bellman equation $Q=\mathcal{H}(Q)$, the modified Bellman equation (\ref{eq:225}) has two major modifications. First is that the reward function is scaled down by a factor of $1+\eta|\mathcal{S}||\mathcal{A}|$. This modifications does not change the optimal policy since the optimal policy is invariant to the scaling of the optimal $Q$-function. A more important modification is that the discount factor $\gamma$ is scaled down by a factor of $1+\eta|\mathcal{S}||\mathcal{A}|$. This change potentially results in a different optimal policy compared to the original problem. In fact, since the tunable parameter $\eta$ is first multiplied by $|\mathcal{S}||\mathcal{A}|$ before it appears in the denominator of $\gamma$ in Eq. (\ref{eq:225}), using postive $\eta$ changes the discount factor of the problem drastically.

\subsection{Coupled $Q$-Learning \cite{carvalho2020new}}\label{ap:Melo}
A two time-scale variant of $Q$-learning with linear function approximation (called coupled $Q$-learning) was proposed in \cite{carvalho2020new}.
It was shown in \cite{carvalho2020new} that the limit points $u^*$ and $v^*$ of the coupled $Q$-learning algorithm satisfy the following systems of equations:
\begin{align}
    u^*&=\Phi^\top D\mathcal{H}(\Phi u^*)\label{eq:325}\\
    v^*&=(\Phi^\top D\Phi)^{-1} u^*,\label{eq:326}
\end{align}
where $D$ is a diagonal matrix with diagonal entries being the stationary distribution of the Markov chain $\{(S_k,A_k)\}$ induced by the behavior policy $\pi_b$.

Under the assumption that $\|\phi(s,a)\|_2\leq 1$ for all $(s,a)$ (Assumption 2 in \cite{carvalho2020new}), and $\Phi^\top D\Phi=\sigma I_d$ (Assumption 4 in \cite{carvalho2020new}), where $d$ is the number of basis vectors, a performance guarantee is provided regarding the distance between the $Q$-function estimate $Q_{v^*}$ associated with $v^*$ and the optimal $Q$-function $Q^*$, and is presented in the following.

\begin{theorem}[Theorem 2 in \cite{carvalho2020new}]\label{thm:Melo}
The limit point $v^*$ satisfies
\begin{align}\label{eq:289}
    \|Q_{v^*}-Q^*\|_\infty\leq \frac{1}{1-\gamma}\|Q^*-\text{Proj}_{\mathcal{W}}(Q^*)\|_\infty+\mathcal{E}_\sigma,
\end{align}
where $\mathcal{E}_\sigma=\frac{1-\sigma}{\sigma}\frac{\gamma}{(1-\gamma)^2}$.
\end{theorem}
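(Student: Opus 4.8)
The plan is to show that the $Q$-function estimate $Q_{v^*}=\Phi v^*$ solves a \emph{perturbed} projected Bellman equation, and then run the standard approximate-value-iteration error decomposition, controlling the perturbation via the structural assumptions $\Phi^\top D\Phi=\sigma I_d$ and $\|\phi(s,a)\|_2\le 1$. First I would eliminate $u^*$: since $\Phi^\top D\Phi=\sigma I_d$, \eqref{eq:326} gives $u^*=\sigma v^*$, hence $\Phi u^*=\sigma\Phi v^*=\sigma Q_{v^*}$. Substituting this into \eqref{eq:325} and then left-multiplying by $\Phi(\Phi^\top D\Phi)^{-1}=\sigma^{-1}\Phi$ gives $\sigma^{-1}\Phi u^*=\Phi(\Phi^\top D\Phi)^{-1}\Phi^\top D\,\mathcal H(\sigma Q_{v^*})=\text{Proj}_{\mathcal W}\mathcal H(\sigma Q_{v^*})$; since also $\sigma^{-1}\Phi u^*=\Phi v^*=Q_{v^*}$, we conclude
\[
 Q_{v^*}=\text{Proj}_{\mathcal W}\mathcal H(\sigma Q_{v^*}),
\]
i.e.\ the projected Bellman equation with the argument of $\mathcal H$ shrunk by the factor $\sigma$.

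Next I would prove the one ingredient that substitutes for the usual non-expansiveness of $\text{Proj}_{\mathcal W}$ in $\|\cdot\|_D$, namely an $\ell_\infty$ operator-norm bound. Because $\text{Proj}_{\mathcal W}=\sigma^{-1}\Phi\Phi^\top D$, for any $Q$ and any $(s,a)$ one has $(\text{Proj}_{\mathcal W}Q)(s,a)=\sigma^{-1}\sum_{s',a'}\mu(s')\pi_b(a'|s')\,(\phi(s,a)^\top\phi(s',a'))\,Q(s',a')$, and Cauchy--Schwarz together with $\|\phi\|_2\le 1$ gives $|\phi(s,a)^\top\phi(s',a')|\le 1$, so that $\|\text{Proj}_{\mathcal W}Q\|_\infty\le\sigma^{-1}\|Q\|_\infty$. (Note $\sigma\le 1$, since $d\sigma=\mathrm{tr}(\Phi^\top D\Phi)=\sum_{s,a}\mu(s)\pi_b(a|s)\|\phi(s,a)\|_2^2\le 1$.)

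Finally, using $Q^*=\mathcal H(Q^*)$ and linearity of $\text{Proj}_{\mathcal W}$, I would write $Q_{v^*}-Q^*=\text{Proj}_{\mathcal W}\big(\mathcal H(\sigma Q_{v^*})-\mathcal H(Q^*)\big)+\big(\text{Proj}_{\mathcal W}Q^*-Q^*\big)$. The operator-norm bound above, the $\gamma$-contraction of $\mathcal H$ in $\|\cdot\|_\infty$, the split $\|\sigma Q_{v^*}-Q^*\|_\infty\le\sigma\|Q_{v^*}-Q^*\|_\infty+(1-\sigma)\|Q^*\|_\infty$ (valid since $\sigma\le 1$), and $\|Q^*\|_\infty\le 1/(1-\gamma)$ bound the first term by $\gamma\|Q_{v^*}-Q^*\|_\infty+\frac{\gamma(1-\sigma)}{\sigma(1-\gamma)}$. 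Adding $\|\text{Proj}_{\mathcal W}Q^*-Q^*\|_\infty$, collecting the $\|Q_{v^*}-Q^*\|_\infty$ terms, and dividing by $1-\gamma$ yields exactly \eqref{eq:289} with $\mathcal E_\sigma=\frac{1-\sigma}{\sigma}\frac{\gamma}{(1-\gamma)^2}$. The main obstacle is the norm mismatch: $\text{Proj}_{\mathcal W}$ is non-expansive in $\|\cdot\|_D$ but not in $\|\cdot\|_\infty$, so it cannot simply be passed through an $\ell_\infty$ error estimate; overcoming this is precisely why the assumption $\Phi^\top D\Phi=\sigma I_d$ is imposed, and the resulting factor $\sigma^{-1}\ge 1$ in the operator-norm bound is exactly what prevents $\mathcal E_\sigma$ from vanishing unless $\sigma=1$.
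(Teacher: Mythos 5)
Your derivation is correct, but note that the paper itself gives no proof of this statement: it is quoted verbatim as Theorem 2 of \cite{carvalho2020new} and is used only to argue that the resulting bound is vacuous unless $\gamma$ is small, so there is no in-paper proof to compare against. Your argument is a valid, self-contained reconstruction from the stated limit-point equations (\ref{eq:325})--(\ref{eq:326}) and the two assumptions: eliminating $u^*$ via $\Phi^\top D\Phi=\sigma I_d$ to get $Q_{v^*}=\text{Proj}_{\mathcal{W}}\mathcal{H}(\sigma Q_{v^*})$, establishing the $\ell_\infty$ operator bound $\|\text{Proj}_{\mathcal{W}}Q\|_\infty\le \sigma^{-1}\|Q\|_\infty$ from $\|\phi(s,a)\|_2\le 1$ and $\text{Proj}_{\mathcal{W}}=\sigma^{-1}\Phi\Phi^\top D$, and then running the contraction decomposition with the split $\|\sigma Q_{v^*}-Q^*\|_\infty\le\sigma\|Q_{v^*}-Q^*\|_\infty+(1-\sigma)\|Q^*\|_\infty$; each step checks out and the constants come out exactly as in (\ref{eq:289}). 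Your intermediate fixed-point equation is also consistent with the paper's own diagnosis of coupled $Q$-learning in Appendix \ref{ap:Melo} (the argument of $\mathcal{H}$ is effectively shrunk, i.e., the discount factor is implicitly reduced, which is why $\mathcal{E}_\sigma$ does not vanish even in the tabular case), and your trace computation showing $\sigma\le 1$ is the same calculation the paper uses to conclude $\sigma\le 1/d$.
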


Note that in Eq. (\ref{eq:289}) of Theorem \ref{thm:Melo}, in addition to the function approximation error, there is an additional error term $\mathcal{E}_\sigma$ that does not vanish even in the tabular setting. Although the coupled $Q$-learning algorithm does not require strong assumptions to converge, we next show that in order for the performance bound (\ref{eq:289}) to be non-trivial, the discount factor $\gamma$ must be sufficiently small.

Consider the error term $\mathcal{E}_\sigma=\frac{1-\sigma}{\sigma}\frac{\gamma}{1-\gamma}\frac{1}{1-\gamma}$.
Since $\|Q^*\|_\infty\leq \frac{1}{1-\gamma}$, in order for the performance bound of Theorem \ref{thm:Melo} to be meaningful, we need to at least have
\begin{align*}
    \frac{1-\sigma}{\sigma}\frac{\gamma}{1-\gamma}<1,
\end{align*}
otherwise simply choosing $Q=\bm{0}$ leads to a better performance guarantee. The above inequality implies $\gamma<\sigma$.
However, under the assumption that $\|\phi(s,a)\|_2\leq 1$ for all $(s,a)$ (Assumption 2 in \cite{carvalho2020new}), and $\Phi^\top D\Phi=\sigma I_d$ (Assumption 4 in \cite{carvalho2020new}), we have
\begin{align*}
    d\sigma=\sum_{i=1}^d\sum_{s,a}\phi_i(s,a)^2D(s,a)
    =\sum_{s,a}D(s,a)\sum_{i=1}^d\phi_i(s,a)^2
    \leq \sum_{s,a}D(s,a)
    =1,
\end{align*}
which implies $\sigma\leq \frac{1}{d}$. As a result, Theorem \ref{thm:Melo} provides a meaning performance guarantee on the limit point $v^*$ only when $\gamma\leq \frac{1}{d}$, which is a restrictive requirement on the discount factor $\gamma$ of the problem. 

To see more explicitly the reason that the coupled $Q$-learning algorithm has an additional bias $\mathcal{E}_\sigma$,  consider the tabular setting, i.e., $\Phi=I_{|\mathcal{S}||\mathcal{A}|}$. In this case Assumption 4 of \cite{carvalho2020new} reduces to $D=\frac{1}{|\mathcal{S}||\mathcal{A}|}I_{|\mathcal{S}||\mathcal{A}|}$ (uniform exploration). Then Eq. (\ref{eq:325}) is equivalent to
\begin{align*}
    Q_{u^*}(s,a)&=\frac{1}{|\mathcal{S}||\mathcal{A}|}[\mathcal{H}(Q_{u^*})](s,a)\\
    &=\frac{1}{|\mathcal{S}||\mathcal{A}|}\mathcal{R}(s,a)+\frac{\gamma}{|\mathcal{S}||\mathcal{A}|}\mathbb{E}[\max_{a'\in\mathcal{A}}Q_{u^*}(S_{k+1},a')\mid S_k=s,A_k=a],\quad \forall\;(s,a).
\end{align*}
As illustrated in the previous subsection, such modification of the Bellman equation is equivalent to artificially scaling down the discount factor $\gamma$ of the original problem by a factor of $|\mathcal{S}||\mathcal{A}|$. In view of Eq. (\ref{eq:326}), when $\Phi=I_{|\mathcal{S}||\mathcal{A}|}$ and $D=\frac{1}{|\mathcal{S}||\mathcal{A}|}I_{|\mathcal{S}||\mathcal{A}|}$, $Q_{v^*}$ is just a constant scaling of $Q_{u^*}$. Hence the optimal policy induced from either $Q_{v^*}$ or $Q_{u^*}$ is the one that corresponds to the original problem with the discount factor $\gamma$ being replaced by $\frac{\gamma}{|\mathcal{S}||\mathcal{A}|}$. Because of this implicit modification on the problem discount factor, the coupled $Q$-learning algorithm although is stable, does not converge to $Q^*$ even in the tabular setting.
\end{appendix}

\end{document}